\documentclass[pdflatex,sn-mathphys-ay]{sn-jnl}

\usepackage{graphicx}%
\usepackage{multirow}%
\usepackage{amsmath,amssymb,amsfonts}%
\usepackage{amsthm}%
\usepackage{mathrsfs}%
\usepackage{mathalpha}%
\usepackage[title]{appendix}%
\usepackage{xcolor}%
\usepackage{textcomp}%
\usepackage{manyfoot}%
\usepackage{lmodern}
\usepackage{wasysym}
\usepackage{booktabs}%
\usepackage{algorithm}%
\usepackage{algorithmicx}%
\usepackage{algpseudocode}%
\usepackage{listings}%
\usepackage{fullpage}
\usepackage{hyperref} 

\theoremstyle{thmstyleone}%
\newtheorem{theorem}{Theorem}
\theoremstyle{thmstyletwo}%
\newtheorem{remark}{Remark}%

\theoremstyle{thmstylethree}%
\newtheorem{definition}{Definition}%

\usepackage{amsmath,amsthm,amssymb,amsxtra,mathtools,bm}
\usepackage{array}
\usepackage{graphicx}
\usepackage{algorithm}
\usepackage{algpseudocode}
\usepackage{physics}
\usepackage{MnSymbol,wasysym}
\usepackage{tikzsymbols}
\usepackage{array,tabularx,multirow}
\usepackage{makecell}
\usepackage{array}
\usepackage{paralist}

\newcolumntype{P}[1]{>{\centering\arraybackslash}p{#1}}

\usepackage{verbatim}
\usepackage{enumerate}
\usepackage{parskip}
\usepackage{subfigure}

\usepackage[many]{tcolorbox}
\usetikzlibrary{decorations}

\usepackage{color,soul}
\definecolor{clemson-orange}{RGB}{234,106,32}
\definecolor{highlight-orange}{RGB}{255,150,150}
\definecolor{chicago-maroon}{RGB}{128,0,0}
\definecolor{cincinnati-red}{RGB}{190,0,0}
\definecolor{soft-cyan}{RGB}{68,85,90}
\definecolor{firebrick}{RGB}{178,34,34}
\definecolor{crimson}{RGB}{220,20,60}
\definecolor{cerrulean}{rgb}{0.165,0.322,0.745}
\definecolor{jaam}{rgb}{0.45,0.0,0.45}

\usepackage{hyperref}
\hypersetup{
  colorlinks   = true,    
  urlcolor     = jaam,    
  linkcolor    = firebrick,    
  citecolor    = blue      
}

\usepackage{parskip}

\usepackage{thmtools}
\declaretheoremstyle[
   headfont=\bfseries, 
   bodyfont=\normalfont\itshape, spaceabove=10pt,
   spacebelow=10pt]{mystyle}
\theoremstyle{mystyle}
\theoremstyle{definition}
\newtheorem*{theorem*}{Theorem}
\newtheorem{lemma}[theorem]{Lemma}

\newif\ifsolutions \solutionstrue

\def\final{0}
\newcommand{\reviewer}[3]{
  \expandafter\newcommand\csname #1\endcsname[1]{
    \ifthenelse{\equal{\final}{1}} {
      \textcolor{#3}{}
    } {
      \textcolor{#3}{\begin{center} \textbf{#2} ##1 \end{center}}
    }
  }
}

\reviewer{anirbit}{}{jaam}

\renewcommand{\ip}[2]{\left\langle#1,#2\right\rangle}

\def\1{\bm{1}}

\newcommand{\E}{\mathbb{E}}

\def\vepsilon{{\bm{\epsilon}}}

\def\vomega{{\bm{\omega}}}
\def\va{{\bm{a}}}

\def\vf{{\bm{f}}}
\def\vg{{\bm{g}}}

\def\vt{{\bm{t}}}
\def\vu{{\bm{u}}}
\def\vv{{\bm{v}}}
\def\vw{{\bm{w}}}
\def\vx{{\bm{x}}}
\def\vy{{\bm{y}}}
\def\vz{{\bm{z}}}

\def\mA{{\bm{A}}}

\def\mW{{\bm{W}}}

\def\mZ{{\bm{Z}}}

\DeclareMathAlphabet{\mathsfit}{\encodingdefault}{\sfdefault}{m}{sl}
\SetMathAlphabet{\mathsfit}{bold}{\encodingdefault}{\sfdefault}{bx}{n}

\def\gC{{\mathcal{C}}}
\def\gD{{\mathcal{D}}}

\def\gF{{\mathcal{F}}}

\def\gH{{\mathcal{H}}}

\def\gL{{\mathcal{L}}}

\def\gN{{\mathcal{N}}}
\def\gO{{\mathcal{O}}}

\def\gR{{\mathcal{R}}}
\def\gS{{\mathcal{S}}}

\newcommand{\bb}{\mathbb}

\newcommand{\R}{\bb R}

\newcommand{\cR}{\gR}

\begin{document}

\title[<short-title>]{Generalization Bounds for Physics-Informed Neural Networks for the Incompressible Navier-Stokes Equations \footnote{A preliminary version of this result was first presented at the meeting \href{https://drsciml.github.io/drsciml/}{DRSciML 2025 (link)}}}


\author[1]{\fnm{Sebastien} \sur{Andre-Sloan}}\email{sebastien.andre-sloan@postgrad.manchester.ac.uk}
\equalcont{These authors contributed equally to this work.}

\author[1]{\fnm{Dibyakanti} \sur{Kumar}}\email{dibyakanti.kumar@manchester.ac.uk}
\equalcont{These authors contributed equally to this work.}

\author[1]{\fnm{Alejandro} F \sur{Frangi}}\email{alejandro.frangi@manchester.ac.uk}

\author*[1]{\fnm{Anirbit} \sur{Mukherjee}}\email{anirbit.mukherjee@manchester.ac.uk}

\affil[1]{\orgdiv{Department of Computer Science}, \orgname{The University of Manchester},
\orgaddress{}}






\abstract{This work establishes rigorous first-of-its-kind upper bounds on the generalization error for the method of approximating solutions to the $(d+1)$-dimensional incompressible Navier-Stokes equations by training depth-2 neural networks trained via the unsupervised Physics-Informed Neural Network (PINN) framework. This is achieved by bounding the Rademacher complexity of the PINN risk. For appropriately weight bounded net classes our derived generalization bounds do not explicitly depend on the network width and our framework characterizes the generalization gap in terms of the fluid's kinematic viscosity and loss regularization parameters. In particular, the resulting sample complexity bounds are dimension-independent. Our generalization bounds suggest using novel activation functions for solving fluid dynamics. We provide empirical validation of the suggested activation functions and the corresponding bounds on a PINN setup solving the Taylor-Green vortex benchmark.}




\maketitle

\clearpage

\section{Introduction}\label{sec:introduction}

The foundational principles for nearly all natural sciences and engineering disciplines, predominantly manifest as Partial Differential Equations (PDEs). Hence, devising ways to find solutions of PDEs stands as a pivotal subject in the realm of computational science and engineering. Various techniques have been developed to solve PDEs ranging from the plethora of analytic methods to various discretization-based approaches. Significant progress has been made in providing analytic and numerical solutions to various specific PDEs --- but general methods do not exist. A major challenge is to approximately solve PDEs without facing the curse of dimensionality \cite{E_2021, han2018solving} which poses significant difficulties for even very structured classes of PDEs \cite{Beck_2021, berner2020numerically}. Recently, deep learning has emerged as a powerful tool for solving PDEs. Since some of the earliest attempts \cite{Lagaris_1998} at leveraging neural networks for PDE, deep learning techniques are being increasingly used for solving complex scientific problems, giving rise to the field now being known as ``AI for Science''. The potential that this direction holds can be glimpsed from the rapid developments in using AI for advancing climate science \cite{2021ComEE...2..159G, irrgang2021, Yuval_2020}.

A plethora of different ways \cite{karniadakis2021piml} have evolved in which one can write a loss function involving a partial differential equation (or its available approximate solutions) and neural net(s)  such that the net(s) obtained at the end of training can be used to infer approximate solutions to the intended PDEs. These data-driven methods of solving the PDEs can broadly be classified into two kinds, {\bf (1)} ones which train a single neural net to solve a specific PDE and {\bf (2)} operator methods, which train multiple nets in tandem to be able to solve a family of PDEs in one shot \cite{Lu_2021, LU2022114778, wang2021pdeOnet}. The operator methods are of particular  interest when the underlying physics is not known. While many kinds of methods have evolved that can solve the PDEs using deep learning, \cite{kaiser2021datadriven, erichson2019physicsinformed, Wandel_2021, li2022learning, salvi2022neural}, the Physics Informed Neural Networks (PINNs) \cite{raissi2019physics} have emerged as an umbrella framework, a general mathematical specification of which we briefly review in Appendix \ref{sec:pinnbackground}.


Motivated by their broad adoption, the theoretical foundations of PINNs are beginning to get rigorously established. A primary question in the theory of PINNs is to identify sufficient conditions under which minimizing the PINN risk guarantees convergence to the true PDE solution. To this end, in their pioneering paper \cite{mishra2022pinn} consider differential operators $\gL$ that are either linear or admit a well-defined linearization. Specifically, they suppose that there exists an operator $\Bar{\gL}$ such that, for all $u$ and $v$ in the appropriate function space, $\gL(u) - \gL(v) = \Bar{\gL}_{(u,v)}(u - v)$. Under the additional assumption that this linearization operator has a bounded inverse, it is shown that the functional distance between the predictor and the true PDE solution is bounded above by the PINN risk, up to multiplicative constants. This provided the first theoretical justification for why minimizing the PINN risk can lead to accurate PDE solutions. If we further assume periodic boundary conditions, the above results can be extended to the Navier-Stokes equations. But it is to be noted that this analysis is conducted using a pointwise weighted loss function, which differs from the standard losses typically used in practice.

The existence of neural networks that can simultaneously minimize both the PINN risk and the distance to the true PDE solution is investigated in \cite{tim22generic}. The authors show that for PDEs admitting a classical solution $u$, if there exists a neural network $\mathcal{N}$ that can approximate this solution to arbitrary accuracy, then there also exists a $\tanh$-activated neural network $\bar{\gN}$ such that its deviation from the true solution and its PINN risk can be made arbitrarily small, simultaneously. Furthermore, the depth of the surrogate network $\bar{\gN}$ scales as $\text{depth}(\bar{\gN}) = O(\text{depth}(\mathcal{N}))$, while the approximation accuracy of the PDE solution improves inversely with the width of $\bar{\gN}$. Additionally, \cite{tim22generic} provided an upper-bound on the generalization error, which explicitly depends on the number of parameters of the neural network. It is to be noted that the stability conditions assumed in \cite{mishra2022pinn} and the criteria for jointly minimizing the PINN risk and total error as described in \cite{tim22generic} are, a priori, distinct.

In \cite{zeinhofer24pinnerror}, the differential operator is assumed to be linear, and it is further assumed that there exists some $\alpha > 0$ such that, for all $u$, the functional norm of $\gL(u)$ is bounded below by the functional semi-norm of $\alpha \cdot u$. Under these assumptions, \cite{zeinhofer24pinnerror} proved that there exists a neural network capable of reducing the functional distance between the predictor and the true PDE solution.

For linear and second-order PDEs, where the coefficients of the function $u$ and its partial derivatives in $\gL(u)$ are $K$-Lipschitz and bounded in absolute value by $K$ for some $K>0$, \cite{zheyuan22xpinn} analyzed the Rademacher complexity of PINNs to derive an upper-bound on their worst-case generalization error, effectively providing a finite-sample complexity estimate. The derived bound applies to PINNs of arbitrary depth and width but this comes at the cost of restricting the PDE to being linear and second-order.

Extending PINN error analyses to non-linear PDEs, \cite{deryck23errorestpinn} examined the Navier-Stokes equations on a toroidal domain. In this setup, \cite{deryck23errorestpinn} proved that there exist PINNs capable of reducing the functional distance between the network predictor and the true PDE solution, thereby extending theoretical guarantees to a class of non-linear operators. In the next subsection we will compare our results against this. 








{\bf Navier-Stokes and Image Inpainting}
Image inpainting is the task of filling in missing data in an image. \cite{bertalmio2001nsinpaint} established an analogy between image processing and fluid dynamics by showing that image intensity can be interpreted as the stream function of a two-dimensional incompressible Navier-Stokes flow. Consequently, the inpainting problem can be approximated by solving the steady-state form of the $2D$ Navier-Stokes vorticity transport equation with small viscosity $\nu$,  
$
\vomega_t + \vv \cdot \nabla \vomega = \nu \Delta \vomega,
$
where the vorticity is defined as $\vomega \coloneqq \nabla \times \vv$, $\vv$ is the fluid velocity, and the stream function $\Psi$ is related to the vorticity through the Poisson equation $\Delta \Psi = \vomega$. The velocity field can then be obtained from the stream function as $\vv = \left[\pdv{\Psi}{y}, -\pdv{\Psi}{x}\right]$.


Extensions of the Navier-Stokes model have also been employed. For instance, \cite{ebrahimi2012nsvinpaint} applied the Navier-Stokes-Voight model of viscoelastic fluid for image inpainting, showing empirical improvements in both speed and quality.

Thus we note that even well beyond fluid dynamics, there are diverse reasons for keen interest in understanding numerical approximations to the Navier-Stokes PDE. 

\subsection{Summary of Results}

In this work, we establish the following upper-bound on the generalization error for training depth-$2$ neural networks via the unsupervised PINN loss corresponding to the $(d+1)$-dimensional Navier–Stokes equations. An informal restatement of the main theorem is provided below.

\begin{theorem*}[{\bf Informal Statement of Theorem~\ref{thm:NSRad}}]
    Consider neural networks $\gN_\vw$ belonging to a class of depth-2 neural networks. For the PINN empirical risk corresponding to the $(d+1)$-dimensional Navier–Stokes equations, denoted by $\hat{R}(\gN_\vw, \gS_n)$, where $\gS_n$ represents the training set, and the corresponding population risk $R(\gN_\vw)$, the data-averaged worst case generalization error --- defined as the difference between the empirical and population risks --- can be bounded as,
    \begin{align}
        &\E_{\gS_n} \left[\sup_{\gN_\vw} \left( \hat{R}(\gN_\vw, \gS_n) - R(\gN_\vw) \right)\right] \leq \frac{C_r}{\sqrt{N_r}} + \frac{C_0}{\sqrt{N_0}},
    \end{align}
    where $N_r$ denotes the number of training data points in the domain and $N_0$ the number of boundary points corresponding to the initial condition. The constants $C_r$ and $C_0$ can be expressed explicitly in terms of the Lipschitz constant of the loss function, the bounds on certain norms of the weights, the viscosity coefficient, the regularization on the loss terms enforcing the initial-condition and the incompressibility condition, and the bounds on the moments of the data distribution. The constant $C_r$ grows linearly with the viscosity coefficient.
\end{theorem*}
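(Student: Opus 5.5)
We will follow the standard route: symmetrization followed by contraction and explicit Rademacher estimates for each term of the PINN risk. Write the empirical risk as
\[
\hat{R}(\gN_\vw,\gS_n) = \frac{1}{N_r}\sum_{i=1}^{N_r} \ell\big(\gR_\vw(\vx_i,t_i)\big) + \frac{\lambda_0}{N_0}\sum_{j=1}^{N_0} \ell\big(\gN_\vw(\vx_j,0)-\vu_0(\vx_j)\big),
\]
where $\gR_\vw$ collects the momentum residual $\partial_t \vu + (\vu\cdot\nabla)\vu + \nabla p - \nu\Delta \vu$ and the $\lambda_{\mathrm{div}}$-weighted divergence residual $\nabla\cdot \vu$. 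The population risk $R$ is the corresponding expectation. The supremum of a sum is at most the sum of the suprema, so the generalization gap splits into an interior term and an initial-condition term, and they are treated separately.

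\textbf{Reduction to Rademacher complexity.} For each term, the standard symmetrization argument bounds $\E_{\gS_n}\sup(\hat R - R)$ by twice the expected empirical Rademacher complexity of the loss-composed class. Talagrand's contraction lemma, applied with the $L$-Lipschitz loss $\ell$ (vector-valued contraction if the residual is vector-valued), then removes $\ell$ at the cost of a factor $L$, or $\sqrt{2}L$ in the vector-valued case. This leaves the Rademacher complexity of the function classes $\{\gR_\vw\}$ and $\{\gN_\vw(\cdot,0)-\vu_0\}$. The fixed $\vu_0$ term has zero Rademacher complexity and drops out.

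\textbf{Bounding each residual class.} For a depth-2 net $\gN_\vw(\vz)=\sum_k a_k\sigma(\ip{\vw_k}{\vz})$, every linear derivative term has the form $\sum_k a_k \, \mathrm{poly}(\vw_k)\,\sigma^{(m)}(\ip{\vw_k}{\vz})$. Examples are $\partial_t u$, $\nabla p$, $\Delta u = \sum_k a_k\|\vw_k^{x}\|^2\sigma''$, and $\nabla\cdot\vu$. Sub-additivity of Rademacher complexity reduces the task to one such sum at a time. Each sum is handled by the usual $\ell_1$-type (path-norm) bound: pull out $\sum_k |a_k|\,\|\vw_k\|^{m+1}$, bounded by the assumed weight constraint. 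This is exactly why the bound is width-independent. What remains is a single-neuron class $\sup_{\|\vw\|\le 1}\frac1n\sum_i\epsilon_i\sigma^{(m)}(\ip{\vw}{\vz_i})$, where the norm is rescaled into the argument. We bound it by $\mathrm{Lip}(\sigma^{(m)})\cdot \sqrt{\E\|\vz\|^2}/\sqrt n$ using a contraction argument, which requires the activation to have bounded, Lipschitz derivatives up to order three. The resulting estimate grows like $M_2/\sqrt n$ and involves only the second moment of the data, so no dimension appears. The Laplacian term carries an explicit factor $\nu$, which produces the linear dependence of $C_r$ on $\nu$. The divergence term carries $\lambda_{\mathrm{div}}$, and the initial term carries $\lambda_0$.

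\textbf{Main obstacle: the nonlinear convection term.} The term $(\vu\cdot\nabla)\vu$ is a product of two network outputs, $\sum_{k,k'} a_k a_{k'} (\vw_{k'})_j\,\sigma(\ip{\vw_k}{\vz})\sigma'(\ip{\vw_{k'}}{\vz})$, and contraction does not apply directly to products. Our first attempt is the polarization identity $fg=\tfrac14[(f+g)^2-(f-g)^2]$. Since $\sigma,\sigma'$ are bounded and the weights are norm-bounded, $|f\pm g|\le B$ uniformly. The map $s\mapsto s^2$ is therefore $2B$-Lipschitz on $[-B,B]$, and contraction reduces the problem to the Rademacher complexity of $f\pm g$, which the previous step already controls. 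This makes boundedness of $\sigma$ and its derivatives essential, and it is where the constraints on the activation enter. Those constraints are what motivate the suggested activations.

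If polarization gives poor constants, the fallback is a direct double-sum bound. We treat $\sigma(\ip{\vw}{\vz})\sigma'(\ip{\vw'}{\vz})$ as a single neuron class over $(\vw,\vw')$, whose Lipschitz constant in the pre-activations is bounded via $\|\sigma\|_\infty\mathrm{Lip}(\sigma')+\|\sigma'\|_\infty^2$. We then apply a two-dimensional vector contraction. Summing all contributions gives the constants $C_r/\sqrt{N_r}$ and $C_0/\sqrt{N_0}$ stated in Theorem~\ref{thm:NSRad}.
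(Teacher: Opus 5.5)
Your outline matches the paper in most steps: symmetrization, contraction through the Lipschitz loss, sub-additivity over residual terms, an $\ell_1$ pull-out of the outer weights down to single neurons, Talagrand contraction to a linear class, and $B_\vw\sqrt{\sum_i\|\vz_i\|_2^2}$ plus Jensen. The genuine difference is the convection term.

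\textbf{The convection term.} The paper (Lemma~\ref{lem:dcontract2}) expands the product into a double sum over neuron pairs $(q_1,q_2)$ and pulls out $B_{f_2}$. This reduces it to the two-neuron class $\phi_1(\ip{\bar{\vw}_1}{\vz})\phi_2(\ip{\bar{\vw}_2}{\vz})$. It then splits around a reference weight $\vw_2^*$ and replaces the multiplier $\phi_1(\ip{\bar{\vw}_1}{\vz_i})$ inside the Rademacher sum by its sup norm. That step is not a valid contraction, because the multiplier varies with the supremized $\bar{\vw}_1$, whereas the contraction principle needs fixed multipliers.

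Your polarization instead works on whole outputs $f=(\gN_{\vw,u})_m$ and $g=\partial_{x_m}(\gN_{\vw,u})_k$. It needs only sup-norm bounds on $f$ and $g$, which come from $\ell_1$ weight bounds and $B_\sigma,B_{\sigma'}$, together with Rademacher bounds you already have. Every step is an honest scalar contraction. To keep constants tight, use the balanced form $fg=(\lambda f)(g/\lambda)$ with $\lambda^2=B_G/B_F$. This gives $\gR(FG)\le 2(B_G\gR(F)+B_F\gR(G))$, which matches the paper's leading factor $4(B_\sigma L_{\sigma'}+B_{\sigma'}L_\sigma)$. The weight factors $\|\va_{1m}\|_1\|\va_1\odot\vw_{\vx_m}\|_1$ are exactly of the form of the summands of $B_{f_2}$. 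So your route gets the same shape of bound rigorously; your Maurer-type fallback also works, at the price of a $\sqrt2$.

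\textbf{The sum over components.} Treating $\sum_k\ell(\cdot)$ component-wise is safer than the paper's replacement of $\sum_k\ell(A_k)$ by $L_\ell\sum_kA_k$ inside one supremum. That replacement is false for symmetric $\ell$ when $A_2=-A_1$. Do it by sub-additivity over $k$ with scalar contraction, not by vector contraction. The map $h\mapsto\sum_k\ell(h_k)$ is only $L_\ell\sqrt d$-Lipschitz in $\ell_2$, so Maurer's lemma gives $\sqrt{2d}\,L_\ell$, not $\sqrt2L_\ell$.

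\textbf{Two steps that need repair.} Neither changes the $C/\sqrt N$ form.

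First, pulling out $\sum_k|a_k|\|\vw_k\|^{m+1}$ and ``rescaling the norm into the argument'' uses positive homogeneity. Activations like $\tanh^k$ or sigmoid lack it: $\sigma^{(m)}(\ip{\vw}{\vz})\neq\|\vw\|\,\sigma^{(m)}(\ip{\vw/\|\vw\|}{\vz})$. Instead, pull out only $\sum_q|a_q\,\mathrm{poly}(\vw_q)|$, which is the paper's $B_{f_s}$. Keep the neuron supremum over $\|\vw\|_2\le B_\vw$. Contraction then supplies $L_{\sigma^{(m)}}B_\vw\sqrt{\sum_i\|\vz_i\|_2^2}$, which is where $B_\vw$ enters the theorem.

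Second, with $S(\vw)=\sum_i\epsilon_i\sigma^{(m)}(\ip{\vw}{\vz_i})$, the pull-out $\sum_q\alpha_qS(\vw_q)\le(\sum_q|\alpha_q|)\max_q|S(\vw_q)|$ leaves an absolute value. Absolute-value contraction needs $\phi(0)=0$, which fails for example for $\phi=\tanh'$. The fix has three steps:
\begin{itemize}
\item center at $\sigma^{(m)}(0)$;
\item note that $\vw=0$ is feasible, so the centered supremum is nonnegative;
\item use $|x|=\max(x,0)+\max(-x,0)$.
\end{itemize}
This costs a factor $2$ and an additive $|\sigma^{(m)}(0)|/\sqrt N$ per term; this is Lemma~\ref{lem:dabsvalremtanh}. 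These additive terms are the source of $C_2$ and of the $|c_0|$ term in the initial-condition bound, and your single-neuron estimate omits them. The same centering is needed when you bound $\gR(F)$ and $\gR(G)$ inside the polarization step.
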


The above bound is independent of the distribution of the training data. Unlike \cite{deryck23errorestpinn}, where the upper-bounds are derived for the $\ell_2$ distance between an arbitrary neural net for any depth and the true solution, our analysis bounds the generalization error for solving the Navier-Stokes PDE. Moreover, we establish finite-sample complexity bounds that do not explicitly depend on the size of the neural network. While \cite{deryck23errorestpinn} derives bounds for a weighted PINN loss that is not typically used in practice, we provide guarantees for the standard PINN loss that is getting ubiquitously deployed. Lastly, Lemma \ref{lem:NSRad} shows that our sample complexity bound is dimension-independent, unlike corresponding estimates in Corollary 3.14 of \cite{deryck23errorestpinn}.




The above theorem applies to depth-$2$ neural networks for a variety of choices of activation functions such as $\tanh^k(x)$, $\text{sigmoid}^k(x)$ for $k \geq 1$and $\exp(-x)\cdot\mathrm{ReLU}^k(x)$ for $k \geq 3$.




\subsubsection{Organization}

We have organized this work as follows. In Section~\ref{sec:related}, we survey existing research on solving the Navier–Stokes equations using neural networks. In Section~\ref{sec:Radmotivation} we outline the motivations for studying Rademacher complexity bounds as a way to understand machine learning. In Section~\ref{sec:NSmathsetup}, we introduce the mathematical definitions and notations to be used in our work. Section~\ref{sec:NSmainresult} presents our main results, followed by Section~\ref{sec:NSproof}, which provides the full proof of the main theorem. The proofs of auxiliary lemmas are detailed in Section~\ref{sec:NSlemproof}. In Section~\ref{sec:num_experiments} we provide empirical validation of the main theorem. Finally, we conclude in Section~\ref{sec:conclusion} with remarks on the implications of our findings and directions for future work.



\subsection{Empirical Evidence of Solving the Navier-Stokes PDE via Deep-Learning} \label{sec:related}


An enormous amount of literature has emerged on solving the Navier-Stokes equations using neural networks. In this section, we first discuss some of the works employing fully unsupervised and semi-supervised loss functions, followed by a brief review of works that use fully supervised losses.

\subsubsection{Unsupervised Ways of Solving Navier-Stokes with Neural Networks}

Some of the earliest attempts of solving the $2D$ Navier-Stokes equations using neural networks were made by \cite{baymani2015annforns}, who performed unsupervised training of a three-layer neural network to approximate the stream function $\Psi$ of an incompressible laminar flow. The stream function was expressed as the sum of two functions, $A$ and $F$, where $A$ is independent of the neural network and enforces the boundary conditions, which is reminiscent of \cite{lagaris98pinn}. In this work, the net was trained to only learn the velocity and not the pressure, which can both be obtained from the learnt stream function.

More recently, \cite{jin2021nsfnet} trained neural networks in an unsupervised manner to solve the Navier-Stokes equations using two approaches: one in which the networks predict velocity and pressure, and another in which they predict vorticity and velocity. They demonstrated experimentally that both approaches can successfully solve three different cases: Kovasznay flow, two-dimensional cylinder wake, and three-dimensional Beltrami flow.

Extending unsupervised neural PDE solvers to turbulent regimes, \cite{vinuesa2022pinnrans} employed neural networks to solve the Reynolds-averaged Navier-Stokes (RANS) equations for incompressible turbulent flows in four different cases. More recently, \cite{wang2025simturbpinn} introduced PirateNet, a physics-informed neural network architecture designed for turbulent flow simulation. They demonstrated its effectiveness under the unsupervised setting on several benchmark problems, including $2D$ Kolmogorov flow, $3D$ Taylor-Green vortex, and $3D$ turbulent channel flow, showing that the method can capture complex flow dynamics across different regimes.

\subsubsection{Semi-supervised Ways of Solving Navier-Stokes with Neural Networks}

Semi-supervised neural network formulations refer to losses where the net is not only penalized for not satisfying the Navier-Stokes but is also penalized for not matching the observed data. In this direction, \cite{vijay2022pinn2dturb} trained neural networks to model two-dimensional turbulence within a periodic box domain, primarily using the semi-supervised PINN loss introduced in \cite{raissi2019pinn}. Furthermore, \cite{vijay2022pinn2dturb} proposes a dual-network architecture in which the total loss is partitioned between two sub-networks: one specialized for capturing high-wavenumber and the other focused on low-wavenumber structures.

A complementary line of work is presented in \cite{yusuf2024turbpinn}, where PINNs are employed to reconstruct the mean flow field from sparse mean velocity measurements in turbulent regimes. In their formulation, the authors augment the semi-supervised PINN loss function with additional physics-based terms derived from the Spalart-Allmaras (SA) turbulence model. By embedding the SA into the learning objective, the approach effectively constraints the solution space, thereby reducing the number of undetermined components in the mean-flow reconstruction. Empirical evaluations on the turbulent periodic hill benchmark demonstrated that the proposed framework yields more accurate velocity field approximations compared to baseline PINN formulations that do not incorporate turbulence-model information.

\subsubsection{Fully Data-Driven Ways of Solving Navier-Stokes with Neural Networks}

In the data assimilation setting, \cite{wang2020pidlforturb} trained neural networks in a supervised manner to predict turbulent flows; that is, given $10$ initial frames, the model predicts the spatiotemporal velocity fields for $60$ time steps ahead. Inspired by techniques such as Reynolds-averaged Navier-Stokes (RANS) and Large Eddy Simulation (LES), \cite{wang2020pidlforturb} decomposed the input velocity field into three components of different scales using two scale-separation operators, implemented as neural network-based filters. A different architectural perspective is explored in \cite{yang2024enhgraphunets}, which conducted a systematic investigation of the Graph U-Net architecture originally proposed by \cite{gao2019graphunet}. \cite{yang2024enhgraphunets} performed extensive experiments on spatio-temporal flow prediction tasks, specifically focusing on modeling vortex shedding dynamics behind a two-dimensional circular cylinder. 

Rather than replacing traditional solvers, some data-driven methods aim to enhance classical turbulence models. In this spirit, \cite{shojaee2025imprans} enhanced the accuracy of RANS-based simulations using neural networks. While RANS-based methods enable the simulation of flows at high Reynolds numbers and are faster, they are generally less accurate than LES-based approaches. Hence, \cite{shojaee2025imprans} aimed to achieve accuracy comparable to that of LES at high Reynolds numbers using neural networks in conjunction with RANS-based simulations.

Supervised learning has also been applied to super-resolution tasks in fluid dynamics. \cite{page2025supresturb} employs a convolution-based network to reconstruct high-resolution two-dimensional turbulent Kolmogorov flows from their low-resolution counterparts through fully supervised training. Building on the same objective in a higher-dimensional setting, \cite{barwey2025superrgnn} introduces a graph neural network architecture for mesh-based super-resolution of three-dimensional fluid flows and demonstrated its effectiveness on the Taylor-Green vortex benchmark.

Finally, \cite{alkin2025abupt} introduced a neural operator framework called the Anchored-Branched Universal Physics Transformer (AB-UPT), designed to encode diverse input representations and generate outputs at arbitrary spatial resolutions. In particular, \cite{alkin2025abupt} demonstrates the effectiveness of AB-UPT in modeling and solving turbulent flow problems.

\subsection{Motivations for Studying  the Rademacher Complexity}\label{sec:Radmotivation}

For a given class of predictors $\gF$, a loss function $\ell : \R^k \times \R^k \to [0, \infty)$, and a training dataset $\gS_n = \{ (\vx_i, \vy_i)_{i=1}^n\}$ sampled from a distribution $\gD$ on $\R^d \times \R^k$ the corresponding average $n-$point Rademacher complexity is defined as,
\begin{align*}
    \gR_n(\gF) \coloneqq \E_{\gS_n}\left[\E_\epsilon \left[ \sup_{f \in \gF} \frac{1}{n} \sum_{i=1}^n \epsilon_i \ell(\vy_i, f(\vx_i)) \right]\right],
\end{align*}
where each $\epsilon_i$ is uniformly sampled from $\{-1,1\}$.

In their seminal work, \cite{bartlett2003rademacher} demonstrated how Rademacher complexity can be used to upper-bound the data-averaged worst-case generalization error for any class of functions,
\begin{align}
\E_{\gS_n}\left[\sup_{f \in \gF} \left(\hat{R}(f,\gS_n) - R(f)\right)\right] \leq 2 \gR_n(\gF),
\end{align}

where $R(f) \coloneqq \E_{(\vx,\vy)}[\ell(\vy,f(\vx))]$ is the population risk, with $(\vx, \vy) \in \R^d \times \R^k$ is data sampled from a distribution $\gD$ on $\R^d \times \R^k$, and $\hat{R}(f,\gS_n) \coloneqq \frac{1}{n} \sum_{i=1}^n [\ell(\vy_i,f(\vx_i))]$ is the empirical risk. Alternative formulations of this bound can be found in \cite{shai2014book,tengyucs229m,telgarsky2021notes}.

Since the goal of learning is to minimize $R(f)$, and we only have access to $\hat{R}(f,\gS_n)$, one must ensure that the generalization gap, i.e., the difference between these two quantities, remains small. Hence successful learning depends on a delicate interplay between the size of the training set, the underlying data distribution, and the choice of the model class --- like the choice of the neural network architecture and hence the number of training parameters. But we note that it is non-trivial to establish $\hat{R}(f, \gS_n)$ converges to $R(f)$ as the number of samples increases.

In particular, given a class of predictors $\gF$, it can be prohibitively challenging to guarantee that the generalization error is uniformly small for all $f \in \gF$ and that it holds with high probability over training samples. Therefore a more tractable objective is to study the expected worst-case generalization error, defined as $\E_{\gS_n}[\sup_{f \in \gF} (\hat{R}(f,\gS_n) - R(f))]$. This quantity is independent of any specific learning algorithm or dataset but provides valuable insight into how the sample size, data distribution, and model complexity interact in determining generalization performance.


Following the work of \cite{bartlett2003rademacher}, several studies have established Rademacher-based upper-bounds for various classes of neural networks. \cite{neyshabur2015normbased} derived Rademacher-based bounds for ReLU neural nets for binary classification task under two types of regularization: bounding the norm of each weight individually and bounding the overall norm across all weights. \cite{bartlett2017spectrally} derived upper-bounds for ReLU nets on multi-class classification tasks under the ramp loss function. The analysis in \cite{bartlett2017spectrally} employs covering numbers to bound the Rademacher complexity of this function class. \cite{neyshabur2018the} focuses on analyzing the behavior of neural networks under over-parameterization. To this end, they derived Rademacher-based bounds for ReLU-activated, depth-2 neural networks on multi-class classification tasks with the ramp loss. Empirically, it was shown that these bounds correlate with the test error and decrease as the number of hidden units increases, thereby explaining the regularizing effect of over-parameterization. \cite{golowich2020sizeindp} showed that for ReLU-activated neural networks deployed for classification tasks with the ramp loss, these Rademacher-based upper-bounds can be formulated to be independent of the network size, with the depth dependence being non-exponential.

More recently \cite{lu2022priori} considered a rescaled Softplus-activated, depth-2 neural network to approximate the ground-state eigenvalue of the Schr\"odinger operator, $\gH u = - \Delta u + V u = \lambda u$, where $\gH \coloneqq - \Delta + V$ denotes the Hamiltonian operator. It is defined on $\Omega = [0,1]^d$ with boundary $\partial \Omega$, subject to the Neumann boundary condition of $\pdv{u}{\nu} = 0$ on $\partial \Omega$. \cite{lu2022priori} analyzed the Rademacher complexity of a loss function formulation of this task to provide corresponding upper-bounds.

\cite{zheyuan22xpinn} analyzed the Rademacher complexity of PINNs, for linear and second-order PDEs, where the coefficients of the function $u$ and its partial derivatives in $\gL(u)$ are $K$-Lipschitz and bounded in absolute value by a constant.

In contrast to the existing literature, in this work, we derive a first-of-its-kind upper-bounds on the Rademacher complexity of the class of PINN loss functions for a non-linear PDE solving target, in particular that of the Navier-Stokes equations. 

\section{Mathematical Setup}\label{sec:NSmathsetup}

In this section, we introduce the notation and definitions for the neural net class, the loss function, and the PDE target on which we focus for the PINN learning task under consideration. We begin by recalling the Huber loss \citep{huber1964loss}, which serves as the loss function for training our neural network.

\begin{definition}[{\bf Huber Loss}]\label{def:huber}
    For some $\delta \geq 0$ the Huber loss is defined as
    \begin{align*}
        \ell_{H,\delta} (x) \coloneqq
        \begin{cases}
            \frac{1}{2} x^2 &\quad \mbox{for } \abs{x}\leq \delta \\
            \delta \cdot (\abs{x} - \frac{1}{2} \delta) &\quad \mbox{for } \abs{x} > \delta
        \end{cases}        
    \end{align*}
\end{definition}

\subsection{PINN Loss for \texorpdfstring{$(d+1)$}{d+1}-Navier-Stokes}

Consider the incompressible Navier-Stokes PDE in  $d+1$ dimensional space-time, corresponding to an initial condition described by $\vf_0$.\footnote{ Recall the following notation: 
\[
\partial_t
\begin{bmatrix}
 u_1\\u_2\\.\\.\\.\\u_d\\
\end{bmatrix}
=
\begin{bmatrix}
 \partial_t u_1\\ \partial_t u_2\\.\\.\\.\\\partial_t u_d\\
\end{bmatrix}
\ , \
(\sum_{i=1}^{d} u_i\partial_{x_i})
\begin{bmatrix}
 u_1\\u_2\\.\\.\\.\\u_d\\
\end{bmatrix}
=
\begin{bmatrix}
 \sum_{i=1}^{d} u_i\partial_{x_i} u_1\\ \ \sum_{i=1}^{d} u_i\partial_{x_i} u_2\\.\\.\\.\\ \sum_{i=1}^{d} u_i\partial_{x_i} u_d\\
\end{bmatrix}
\ , \
\nu
\begin{bmatrix}
 \nabla^{2}u_1\\ \nabla^{2}u_2\\.\\.\\.\\ \nabla^{2}u_d\\
\end{bmatrix}
=
\nu
\begin{bmatrix}
 \sum_{i=1}^{d} \partial_{x_i}^2 u_1\\ \ \sum_{i=1}^{d} \partial_{x_i}^2 u_2\\.\\.\\.\\ \sum_{i=1}^{d} \partial_{x_i}^2 u_d\\
\end{bmatrix}
\]
}
\begin{align}\label{eq:dNS}
\nonumber \partial_t \vu + (\vu \cdot \nabla)\vu + \nabla p &= \nu \nabla^2 \vu,  
\\
\vu(\vx, 0) &= \vf_0(\vx), 
\\
\nonumber \nabla \cdot \vu &= 0
\end{align}
where we choose density parameter $\rho = 1$ for simplicity. In above $\vu$ and $p$ are the velocity and pressure field of the fluid and $\nu >0$ is the kinematic viscosity.

\begin{definition}[{\bf Neural Network for $(d+1)$-Navier-Stokes}]\label{def:nn}
    Let $\sigma : \R \to \R$, the input $\vz=(\vx,t) \in \R^{d+1}$, the weights $\mW \in \R^{p \times (d+1)}$, where $\mW = [ \vw_1, \vw_2, \dots, \vw_p ]^\top$, $\vw_q \in \R^{d+1}$ for $q\in\{1,\dots p\}$, and the frozen second layer weights $\mA_1\in\R^{d\times p}$ and $\va_2\in\R^p$. Then we can define a neural network $\gN_\vw : [0,1]^{d+1} \rightarrow \R^{d+1}$ whose output coordinates are denoted as mapping, 
    \[ (\vx,t) = \vz \mapsto (\gN_{\vw,u}(\vz), \gN_{\vw,p}(\vz) )\]
    \[\text{where,}\] 
    \begin{align*}
        \gN_{\vw,u}(\vz) \coloneqq \mA_1\sigma(\mW \vz),  ~\gN_{\vw,p}(\vz) \coloneqq \ip{\va_2}{\sigma(\mW \vz)}
    \end{align*}
    \[\text{and}  ~\sigma(\mW \vz) = [\sigma(\vw^\top_1 \vz), \sigma(\vw^\top_2 \vz),\dots,\sigma(\vw^\top_p \vz)]^\top\].

    
\end{definition}



\begin{definition}[{\bf Collocation Points for $(d+1)$-Navier-Stokes}]\label{def:collocation}
Let the sampled collocation points available to solve the PDE be denoted as the following two sets of points,

\begin{align*}
\{ (\vx_{ri},t_{ti}) \in [0,1]^d \times [0,1] \mid i=1,\ldots,N_r \}, ~\{ \vx_{0j} \in [0,1]^d  \mid j=1,\ldots,N_0 \}
\end{align*}

We also assume that there exists constant such that $\E_{\vz_{0j}}[\norm{\vz_{0j}}_2^2] \leq C_{\vz_0}^2, \forall j \in [N_0]$, where $\vz_{0j} \coloneqq (\vx_{0j},0)$, and $\E_{\vz_i}[\norm{\vz_i}_2^2] \leq C_{\vz}^2, \forall i \in [N_r]$, where $\vz_i = (\vx_{ri}, t_{ri})$.
\end{definition}

\begin{definition}[{\bf Empirical Risk for $(d+1)$-Navier-Stokes}]\label{def:erisk}
For a Lipschitz loss function $\ell : \R \rightarrow [0,\infty)$, a neural net $\gN_\vw : [0,1]^{d+1} \rightarrow \R^{d+1}$ and regularization parameters $\lambda_0, \lambda_1 >0$, the corresponding empirical risk with collocation points given by Definition \ref{def:collocation} can be written as,

\begin{align}
     \hat{R}(\gN_\vw) \coloneqq &\frac{1}{N_r}\sum_{i=1}^{N_r} \sum_{k=1}^d\ell \left( (\partial_t \gN_{\vw,u})_k + ((\gN_{\vw,u} \cdot \nabla) \gN_{\vw,u})_k + (\nabla \gN_{\vw,p})_k - \nu (\nabla^2 \gN_{\vw,u})_k \right )\mid_{(\vx_{ri},t_{ri})} \nonumber\\
     &+ \frac{\lambda_0}{N_r}\sum_{i=1}^{N_r}\ell \left  ( \nabla\cdot \gN_{\vw,u} \right )\mid_{(\vx_{ri},t_{ri})}   + \frac{\lambda_1}{N_0}\sum_{j=1}^{N_0} \sum_{k=1}^d\ell \left ( (\gN_{\vw,u}(\vx,0))_k - (\vf_0 (\vx))_k \right )\mid_{\vx_{0j}}
\end{align}
\end{definition}

The three terms mentioned above individually impose penalties when the net does not fulfil the three constraints defined by the PDE outlined in equation \ref{eq:dNS} at the specified collocation points.

\begin{definition}[{\bf Loss Function for $(d+1)$-Navier-Stokes}]\label{def:lossclass}
    We can define two separate loss functions, $\ell_{res}$ and $\ell_{0}$ s.t they measure the error at any space-time interior point and a spatial point, respectively, as follows,
    \begin{align*}
        \ell_{res}(\gN_\vw, (\vx_{ri},t_{ri})) &\coloneqq \sum_{k=1}^d\ell \left( (\partial_t \gN_{\vw,u})_k + ((\gN_{\vw,u} \cdot\nabla)\gN_{\vw,u})_k + (\nabla \gN_{\vw,p})_k - \nu (\nabla^2 \gN_{\vw,u})_k \right )\mid_{(\vx_{ri},t_{ri})} \\
        &\hspace{2.00em}+ \lambda_0 \cdot \ell \left  (\grad \cdot \gN_{\vw,u} \right )\mid_{(\vx_{ri},t_{ri})}\\
        \ell_{0}(\gN_\vw, \vx_{0j}) &\coloneqq \lambda_1 \sum_{k=1}^d \ell \left ( (\gN_{\vw,u}(\vx,0))_k - (\vf_0 (\vx))_k \right )\mid_{\vx_{0j}}
    \end{align*}
    and thus the empirical risk can we written as,
    \begin{align}\label{eq:genpinn}
        \hat{R}(\gN_\vw) \coloneqq &\frac{1}{N_r}\sum_{i=1}^{N_r} \ell_{res}(\gN_\vw, (\vx_{ri},t_{ri}))  + \frac{1}{N_0}\sum_{j=1}^{N_0} \ell_{0}(\gN_\vw, \vx_{0j}).
    \end{align}
\end{definition}
Towards proving Theorem~\ref{thm:NSRad}, we will define certain classes of weights that satisfy bounds on particular combinations of these weights.

\begin{definition}[{\bf Class of Weights for $(d+1)$-Navier-Stokes}]\label{def:Wspace}
    Given weights $\mW \in \R^{p \times (d+1)}$, let 
    $\vw_t = \mW_{:,\,d+1}$, $\vw_{\vx_m} = \mW_{:,\,m}$, and $\va_{1k} = \mA_{1k,:}$ for $m, k \in \{1, \dots, d\}$. Define $\vw_\vx = \sum_{m=1}^d \vw_{\vx_m}$ and $\va_1 = \sum_{k=1}^d \va_{1k}$. 
    
    We can then define the following functions: \begin{inparaenum}[\bf (i)]
        \item $f_1(\mW)=\va_1\odot\vw_t$, 
        \item $f_{2m}(\mW)=\va_{1}\odot\vw_{\vx_m}$,
        \item $f_3(\mW)=\va_2\odot\vw_\vx$,
        \item $f_4(\mW)=\sum_{m=1}^d\va_1\odot\vw_{\vx_m}\odot\vw_{\vx_m}$, and
        \item $f_5(\mW)=\sum_{m=1}^d\va_{1m}\odot\vw_{\vx_m}$,
    \end{inparaenum}
    where $\va_{1m}$, $\va_1$, $\va_2$, $\vw_{\vx_m}$, $\vw_t$, and $\vw_{\vx}$ are as given above.
    Now, given constants $B_{f_i}$ for $i \in [5]$ by the following constraints are defined,
    \begin{gather}\label{eq:fbounds}
        \sum_{q=1}^p \abs{f_s(\mW)_q} \leq B_{f_s} \textrm{ for $s \in \{1,3,4,5\}$}\ ; \qquad
        \sum_{m=1}^d\sum_{q_1,q_2 = 1}^p \abs{f_{2m}(\mW)_{q_1}} \abs{a_{1m,q_2}} \leq B_{f_2},
    \end{gather}
    where $f_1$, $f_{2m}$, $f_3$, $f_4$ and $f_5$ are the functions defined above.
    Corresponding to the above, we define the class of weights $\gC$ as,
    \begin{align*}
        \gC \coloneq \big\{\mW \in \R^{p \times (d+1)}\ \big| \mW \text{ satisfies the bounds given in Equation \ref{eq:fbounds}} \big\}.
    \end{align*}
    
    We further define $\textrm{RS}(\gC)$ as the set of possible row vectors from the elements of $\gC$.
\end{definition}

The neural net chosen in Definition \ref{def:nn} has only one layer of activation and a single layer of trainable weights. However, when this model is trained using the loss function given in Definition \ref{def:lossclass}, with activation $\sigma(\cdot)=\tanh^3(\cdot)$ — which satisfies the conditions to be specified in Theorem \ref{thm:NSRad} — at $N_r=1000$, $N_0=2500$, $\lambda_0=1$, and $\lambda_1=0.3$ on the domain $(x,y,t)\in[0,2]^2\times[0,1]$, we can see in Figure \ref{fig:TG_plot1} that it can indeed appreciably learn the classically challenging benchmark Taylor-Green vortex solutions \citep{taylor1937mechanism} of Navier-Stokes.

We posit that the level of performance shown in  Figure \ref{fig:TG_plot1} is substantial despite the net being of depth $2$, and hence, this motivates a detailed mathematical investigation of the learning capacity of this setup.

\begin{figure}[htbp!]
\begin{subfigure}
    \centering
    \includegraphics[width=0.48\linewidth]{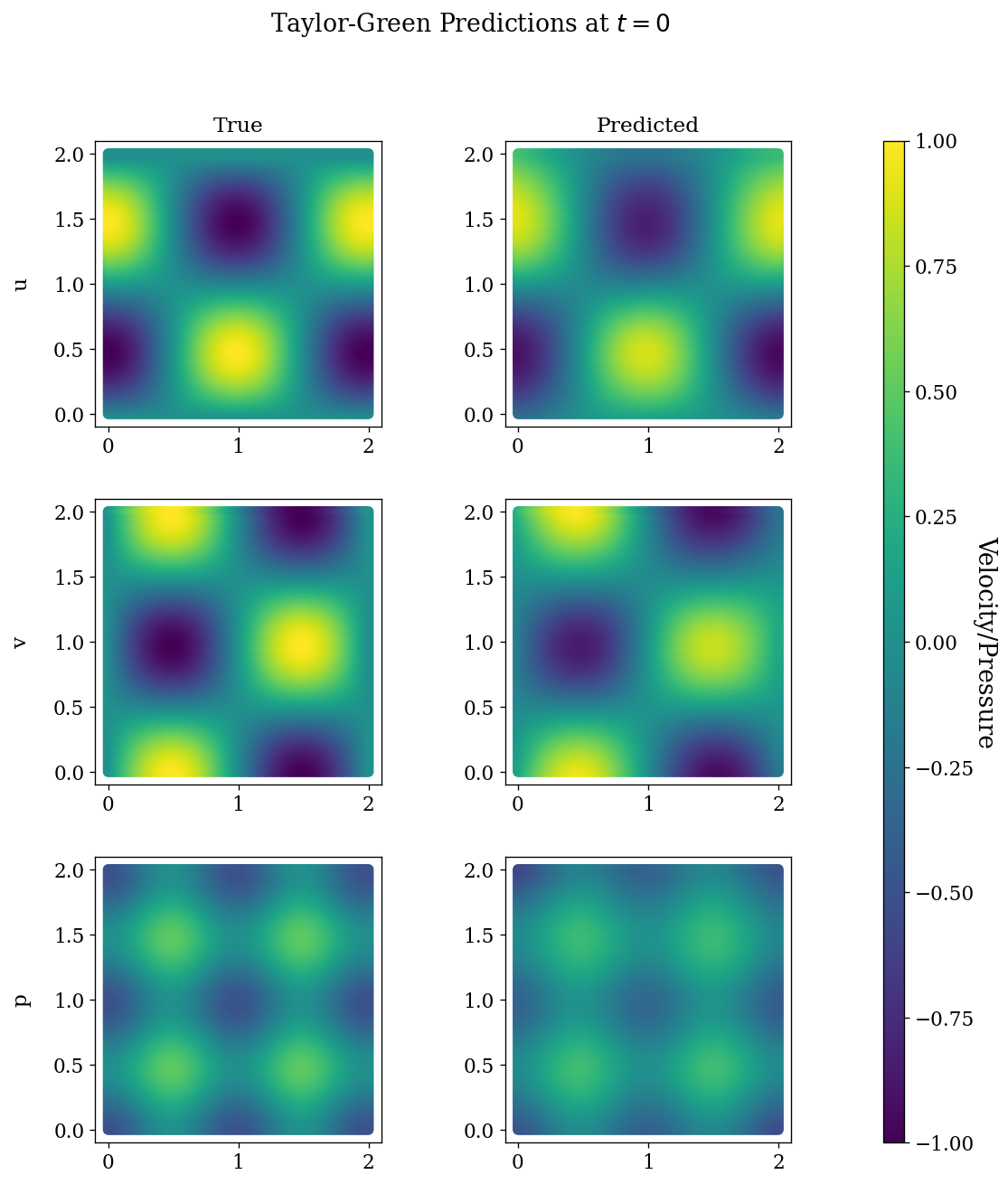}
\end{subfigure}
\hfill  
\begin{subfigure}
    \centering
    \includegraphics[width=0.48\linewidth]{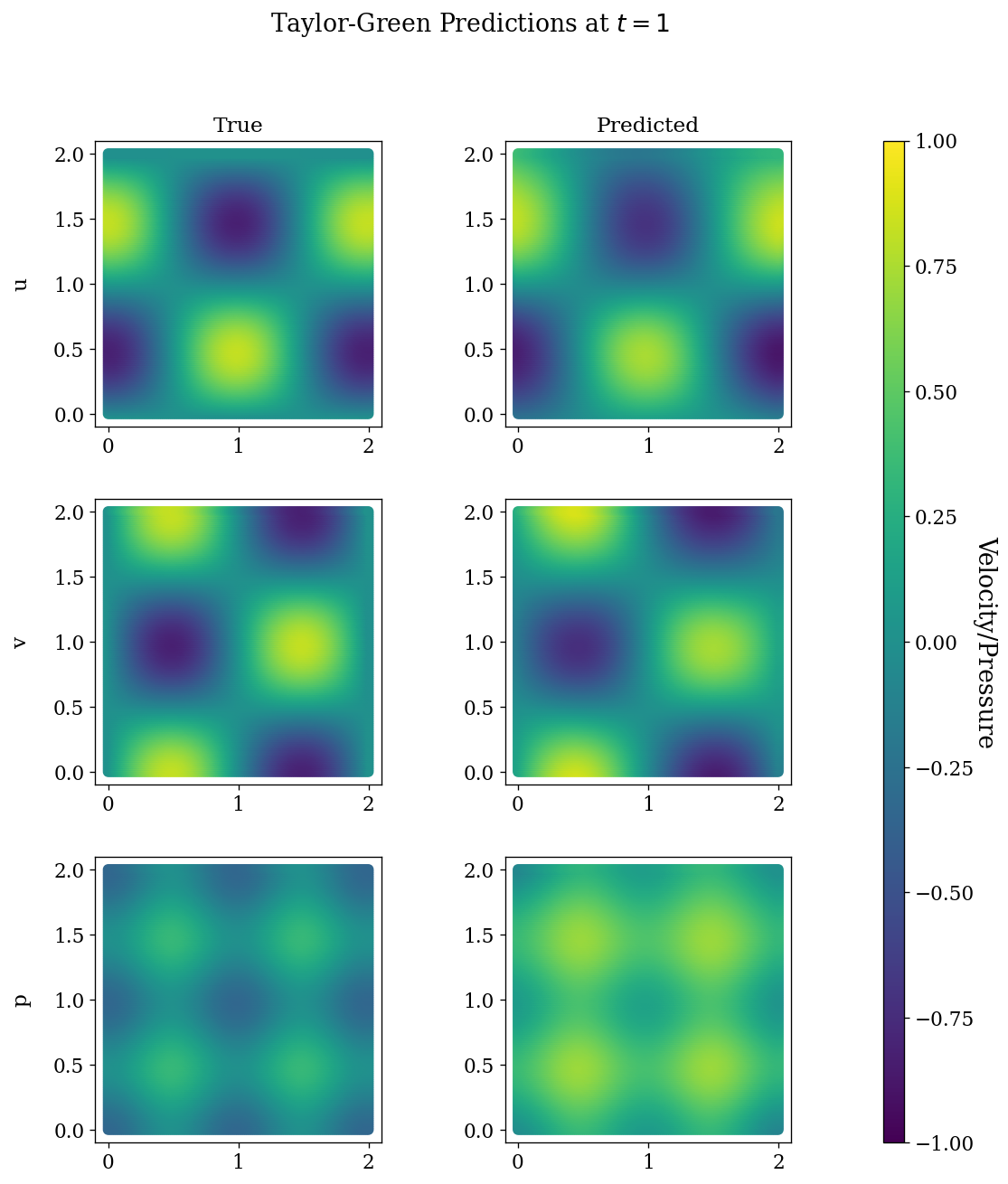}
\end{subfigure}
\caption{The left column is a snapshot of the true solution ($t=0$ for the left plot and $t=1$ for the right plot) and the right column is the neural net prediction. The rows represent the three components of the solution, $u,v$, and $p$. The domain is $(x,y)\in[0,2]^2$, with the viscosity set to $10^{-2}$ and the density set to $1$.}
\label{fig:TG_plot1}
\end{figure}

In Section \ref{sec:num_experiments}, we will return to the demonstration above to study it quantitatively in the light of the generalization bound we present below.





\section[Main Result on Generalization Error for (d+1)-Navier-Stokes]{Main Result on Generalization Error for $(d+1)$-Navier-Stokes}\label{sec:NSmainresult}







In this section, we provide a Rademacher-based generalization upper bound for solving the Navier–Stokes equations with neural networks using a physics-informed loss function.

\begin{theorem}[{\bf Rademacher Based Upper-bound for $(d+1)$-Navier-Stokes}]\label{thm:NSRad}
    Let $\gN_{\vw}$ be a neural network as in Definition~\ref{def:nn}, with weights $\mW \in \gC$, where the class of weights $\gC$ is defined in Definition~\ref{def:Wspace}. We make the following assumptions about the activation function $\sigma(x)$:
    \begin{inparaenum}[(i)]
    \item $\sigma(x)$ is twice differentiable,
    \item $\sigma(x)$, $\sigma'(x)$, and $\sigma''(x)$ are Lipschitz continuous with constants $L_{\sigma}$, $L_{\sigma'}$, and $L_{\sigma''}$, respectively, and
    \item $\sigma(x)$ and $\sigma'(x)$ are bounded by constants $B_{\sigma}$ and $B_{\sigma'}$, respectively.
    \end{inparaenum}
    Let the two components of the loss function, $\ell_{res}$ and $\ell_0$, be defined as in Definition~\ref{def:lossclass}. From Definition~\ref{def:collocation}, we recall the constants $C_\vz$ and $C_{\vz_0}$, which characterize the distributions of the collocation points in the interior and on the initial-condition boundaries, respectively. We can then bound the generalization error of the empirical loss by
    \begin{align}
        &\E_{\substack{(\vx_{ri},t_{ri}) \sim \gD_1 \forall i \in [N_r] \\ \vx_{0j} \sim \gD_2 \forall j\in[N_0]}} \Bigg[\sup_{\mW \in \gC} \biggl( \frac{1}{N_r} \sum_{i=i}^{N_r} \ell_{res}(\gN_\vw,(\vx_{ri},t_{ri})) + \frac{1}{N_0} \sum_{j=i}^{N_0} \ell_{0}(\gN_\vw,\vx_{0j})\nonumber\\ 
        &\hspace{15.00em}- \left( \E_{(\vx_r,t_r) \sim \gD_1}[\ell_{res}(\gN_{\vw},(\vx_{r},t_{r}))] + \E_{\vx_0 \sim \gD_2}[\ell_{0}(\gN_{\vw},\vx_{0})] \right) \biggr)\Bigg] \nonumber\\
        &\hspace{1.00em}\leq \frac{2 \delta (B_{\vw} C_\vz C_1 + C_2)}{\sqrt{N_r}} + \frac{4 \lambda_1 \delta B_\va (B_{\vw} C_{\vz_0} L_\sigma + \abs{c_0})}{\sqrt{N_0}},
    \end{align}
    
    here, $C_1 \coloneqq 2B_{f_1}L_{\sigma'}+4B_{f_2}(B_\sigma L_{\sigma'}+B_{\sigma'}L_\sigma)+2B_{f_3}L_{\sigma'}+2\nu B_{f_4} L_{\sigma''}+2 \lambda_0 B_{f_5}L_{\sigma'}$ and $C_2 \coloneqq  B_{f_1}\abs{c_1} + B_{f_2}(2B_{\sigma'}\abs{c_0} + \abs{c_0 c_1}) + B_{f_3}\abs{c_1} + \nu B_{f_4}\abs{c_2} + \lambda_0 B_{f_5}\abs{c_1}$, where $B_{f_s}$ for $s \in [5]$ are as defined in Definition~\ref{def:Wspace}, $c_0 = \sigma(0)$, $c_1 = \sigma'(0)$, $c_2 = \sigma''(0)$, and $\delta$ is the Lipschitz constant of the loss function $\ell_{H,\delta}$ as in Definition~\ref{def:huber}. We also assume that there exist constants $B_{\vw}$ and $B_\va$ such that $\norm{\vw_q}_2 \leq B_{\vw}, \forall q \in [p]$, where $\vw_q = \mW_{q,:}$, and $\sum_{q=1}^p \abs{(\va_1)_q} \leq B_\va$, for $\va_1$ is as in Definition~\ref{def:nn}.
\end{theorem}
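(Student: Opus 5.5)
We follow the standard Rademacher route. First, by the symmetrization bound of \cite{bartlett2003rademacher} applied separately to the interior and initial-condition parts, the expected worst-case gap is at most
\[
2\,\E\Big[\E_\epsilon \sup_{\mW\in\gC}\tfrac{1}{N_r}\textstyle\sum_i \epsilon_i \ell_{res}(\gN_\vw,\vz_i)\Big] + 2\,\E\Big[\E_\epsilon \sup_{\mW\in\gC}\tfrac{1}{N_0}\textstyle\sum_j \epsilon_j \ell_{0}(\gN_\vw,\vx_{0j})\Big].
\]
The Huber loss $\ell_{H,\delta}$ is $\delta$-Lipschitz. We therefore apply Talagrand's contraction lemma, together with its vector-valued (Maurer-type) version to handle the sum over the $d$ coordinates and the $\lambda_0$ term. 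This removes $\ell$ at the cost of a factor $\delta$ and leaves the Rademacher complexity of the raw residual expressions. Next we use $\sup(A+B)\le \sup A+\sup B$ to split the residual into its constituent terms: time derivative, convection, pressure gradient, viscous Laplacian, and divergence. These produce the five summands of $C_1$ and $C_2$.

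For each term we write it explicitly as a sum over neurons. For example,
\[
\textstyle\sum_k(\partial_t\gN_{\vw,u})_k=\sum_q (\va_1\odot\vw_t)_q\,\sigma'(\vw_q^\top\vz)=\sum_q f_1(\mW)_q\,\sigma'(\vw_q^\top\vz).
\]
Similarly, the pressure gradient gives $f_3$ against $\sigma'$, the Laplacian gives $\nu f_4$ against $\sigma''$, and the divergence gives $\lambda_0 f_5$ against $\sigma'$. (Here the summed coordinates are absorbed into the vector contraction; I would check carefully that the contraction step permits this reduction to the summed quantities appearing in Definition~\ref{def:Wspace}.) For a generic term $\sum_q g_q\,\phi(\vw_q^\top\vz)$ with $\sum_q|g_q|\le B$, the supremum over $\mW$ is bounded by $B\sup_{\vw\in \textrm{RS}(\gC)}|\sum_i\epsilon_i\phi(\vw^\top\vz_i)|$. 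Writing $\phi(s)=\phi(0)+(\phi(s)-\phi(0))$, the constant part contributes $|\phi(0)|\,\E|\sum_i\epsilon_i|\le|\phi(0)|\sqrt{N_r}$; this yields the $|c_1|,|c_2|$ terms in $C_2$. The centred part is $L_\phi$-Lipschitz and vanishes at $0$, so contraction (with the factor $2$ for absolute values) reduces it to $2L_\phi\,\E\sup_{\|\vw\|\le B_\vw}\sum_i\epsilon_i\vw^\top\vz_i\le 2L_\phi B_\vw\sqrt{\sum_i\|\vz_i\|^2}$. By Jensen this gives $2L_\phi B_\vw C_\vz\sqrt{N_r}$, which accounts for the $B_\vw C_\vz$ factor and the factors of $2$ in $C_1$.

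The main obstacle is the nonlinear convection term $\sum_k\sum_m (\gN_{\vw,u})_m\partial_{x_m}(\gN_{\vw,u})_k$. It is a double sum over neurons of the form
\[
\textstyle\sum_m\sum_{q_1,q_2} f_{2m}(\mW)_{q_1}\,a_{1m,q_2}\,\sigma'(\vw_{q_1}^\top\vz)\,\sigma(\vw_{q_2}^\top\vz).
\]
We bound its supremum by $B_{f_2}\sup_{\vw,\vw'\in\textrm{RS}(\gC)}|\sum_i\epsilon_i\sigma'(\vw^\top\vz_i)\sigma(\vw'^\top\vz_i)|$. To handle the product we use the decomposition
\[
\sigma'(s)\sigma(s')=(\sigma'(s)-c_1)\sigma(s')+c_1(\sigma(s')-c_0)+c_0c_1,
\]
or alternatively the fact that a product of bounded Lipschitz functions is Lipschitz in the pair $(s,s')$ with constant $B_\sigma L_{\sigma'}+B_{\sigma'}L_\sigma$. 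We then apply a two-argument (vector) contraction, giving $4(B_\sigma L_{\sigma'}+B_{\sigma'}L_\sigma)B_\vw C_\vz\sqrt{N_r}$, plus constant-type terms matching $2B_{\sigma'}|c_0|+|c_0c_1|$. Getting exactly these constants is the delicate bookkeeping step.

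Finally, the initial-condition term is $\lambda_1\sum_k\ell((\mA_1\sigma(\mW\vz_0))_k-f_0(\vx)_k)$. Since $\vf_0$ is fixed, the target shift does not affect the Rademacher average after contraction. The same argument with $\sum_q|(\va_1)_q|\le B_\va$, $\phi=\sigma$, and moment bound $C_{\vz_0}$ yields $2\lambda_1\delta B_\va(B_\vw C_{\vz_0}L_\sigma+|c_0|)/\sqrt{N_0}$, and the outer factor $2$ gives the stated bound. Summing all contributions completes the proof.
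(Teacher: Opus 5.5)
\paragraph{The flagged step fails.} The step you flagged is exactly where the argument breaks, and it cannot be rescued with the constants of Definition~\ref{def:Wspace}.

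\begin{itemize}
\item Talagrand's contraction needs each summand to be a Lipschitz function of a single scalar. But $\sum_{k=1}^d \ell(X_k)$ is not a function of $\sum_k X_k$.
\item Maurer's vector contraction does not produce the summed quantities either. It replaces $\epsilon_i$ by independent signs $\epsilon_{ik}$, at a cost of $\sqrt{2}$ times the Euclidean Lipschitz constant of $x\mapsto\sum_k \ell(x_k)$, i.e.\ $\sqrt{2d}\,\delta$. It leaves terms such as $\sum_k\sum_i \epsilon_{ik}\ip{\va_{1k}\odot\vw_t}{\sigma'(\mW\vz_i)}$, and controlling these needs $\sum_q \abs{a_{1k,q}}\abs{(\vw_t)_q}$ row by row.
\item However, $B_{f_1},B_{f_2},B_{f_4}$ and $B_\va$ only control the summed row $\va_1=\sum_k\va_{1k}$, in which rows can cancel. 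Likewise $B_{f_3}$ only controls $\va_2\odot\sum_m\vw_{\vx_m}$.
\end{itemize}

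\paragraph{A counterexample for $d\geq 2$.} This is not a bookkeeping issue: the stated bound can fail as soon as $d\geq 2$. Take $d=2$, $\va_{12}=-\va_{11}$ with all entries nonzero, $\va_2=0$ and $\vf_0=0$.

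\begin{itemize}
\item Then $f_1,f_{2m},f_3,f_4\equiv 0$, so $B_{f_1},\dots,B_{f_4}$ and $B_\va$ may all be taken to be $0$.
\item For every $B_{f_5}\geq 0$, the class $\gC$ contains all $\mW$ with $w_{q,1}=w_{q,2}$ for every $q$ (and $\norm{\vw_q}_2\leq B_\vw$). Hence the right-hand side tends to $0$ as $B_{f_5}\to 0$.
\item Write $u_k=(\gN_{\vw,u})_k$. On that subclass $u_2=-u_1$, $\gN_{\vw,p}\equiv 0$, $\nabla\cdot\gN_{\vw,u}=0$, and the convection term vanishes. Since Huber is even, $\ell_{res}=2\ell(\partial_t u_1-\nu\nabla^2 u_1)$ and $\ell_0=2\lambda_1\ell(u_1(\cdot,0))$.
\item This family is non-trivial. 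So the left-hand side is bounded below by its expected worst-case gap, which is strictly positive (e.g.\ for uniform collocation points and $\sigma=\tanh$) and independent of $B_{f_5}$.
\end{itemize}

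\paragraph{Relation to the paper and how to repair it.} The paper's own proof makes the same move. It applies scalar Talagrand contraction ``to each of the terms separately'', writing $\sum_k(\cdot)_k$ in place of $\sum_k\ell((\cdot)_k)$, and does the same for $\gR_0$. Apart from this, your outline (symmetrization, five-way split, $\ell_1$-weighted neuron sums, centring at $\phi(0)$, linear-class bound) follows it closely.

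A correct version proceeds as follows:
\begin{itemize}
\item First use $\sup_{\mW}\sum_k\leq\sum_k\sup_{\mW}$ and apply scalar contraction coordinate by coordinate.
\item Impose the weight constraints per output row. For example, require $\sum_q\abs{a_{1k,q}}\abs{(\vw_t)_q}\leq B_{f_1,k}$ and use $\sum_k B_{f_1,k}$ in place of $B_{f_1}$. Do the same for the pressure, convection, Laplacian and initial-condition terms.
\end{itemize}
Your remaining steps then go through unchanged.

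\paragraph{The convection term.} Your decomposition of $\sigma'(s)\sigma(s')$ still leaves a product in two independent weight vectors. A two-argument Maurer contraction does not in general reproduce the factor $4(B_\sigma L_{\sigma'}+B_{\sigma'}L_\sigma)$; polarization does. Write $ab=\tfrac{1}{4}[(\alpha a+b/\alpha)^2-(\alpha a-b/\alpha)^2]$, contract $t\mapsto t^2$ on the bounded range, and optimize over $\alpha>0$. This gives
\[
\E\sup_{\bar{\vw}_1,\bar{\vw}_2}\sum_i\epsilon_i\phi_1(\ip{\bar{\vw}_1}{\vz_i})\phi_2(\ip{\bar{\vw}_2}{\vz_i})\leq 2(B_{\phi_1}L_{\phi_2}+B_{\phi_2}L_{\phi_1})\,\E\sup_{\bar{\vw}}\sum_i\epsilon_i\ip{\bar{\vw}}{\vz_i}.
\]
Applying Lemma~\ref{lem:dabsvalremtanh} with $c=c_0c_1$ then yields that factor, with only $\abs{c_0c_1}$ in the constant term.
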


The proof of the above theorem is given in Section~\ref{sec:NSproof}. 

We note that similar to the upperbound on total error that was obtained in \cite{deryck23errorestpinn} for a similar setup, here too the upper-bound increases linearly with the viscosity. Unlike \cite{deryck23errorestpinn}, our upper bound does not scale with the dimension $d$ of the fluid domain.

\begin{remark}
    The constants in Theorem~\ref{thm:NSRad} for specific choices of activation functions are as follows:
    \begin{enumerate}[(i)]
        \item $\sigma = \tanh$:
        \begin{equation} 
            L_\sigma = L_{\sigma'} = 1,\;\;
            L_\sigma'' = 2,\;\;
            B_\sigma = B_{\sigma'} = 1,\;\;
            c_0 = c_2 = 0,\;\;
            c_1 = 1.
        \end{equation}
        \item $\sigma = \tanh^3$:
        \begin{equation}
            L_\sigma = 0.75,\;\;
            L_{\sigma'} = 1.4,\;\;
            L_\sigma'' = 6,\;\;
            B_\sigma = 1,\;\;
            B_{\sigma'} = 0.75,\;\;
            c_0 = c_1 = c_2 = 0.
        \end{equation}
    \end{enumerate}
\end{remark}

In Section~\ref{sec:num_experiments} we show empirical results of our bound and its correlation to the generalization error.

\begin{lemma}\label{lem:NSRad}
    Continuing within the framework of Theorem~\ref{thm:NSRad}, it follows that a way to ensure that the RHS is $\gO(\epsilon)$ the number of collocation points $N_r$ and initial condition points $N_0$ can be chosen as,
    \begin{align*}
        N_r &= \gO\left(\frac{\delta^2 (B_\vw C_\vz C_1 + C_2)^2}{\epsilon^2}\right)\\
        N_0 &= \gO\left(\frac{\lambda_1^2 \delta^2 B_\va^2 (B_\vw C_{\vz_0} L_\sigma + \abs{c_0})^2}{\epsilon^2}\right),
    \end{align*}
    where $C_1 \coloneqq 2B_{f_1}L_{\sigma'}+4B_{f_2}(B_\sigma L_{\sigma'}+B_{\sigma'}L_\sigma)+2B_{f_3}L_{\sigma'}+2\nu B_{f_4} L_{\sigma''}+2 \lambda_0 B_{f_5}L_{\sigma'}$ and $C_2 \coloneqq  B_{f_1}\abs{c_1} + B_{f_2}(2B_{\sigma'}\abs{c_0} + \abs{c_0 c_1}) + B_{f_3}\abs{c_1} + \nu B_{f_4}\abs{c_2} + \lambda_0 B_{f_5}\abs{c_1}$.
\end{lemma}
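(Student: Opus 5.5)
The plan is to read the lemma off directly from the right-hand side of Theorem~\ref{thm:NSRad}. That right-hand side is a sum of two nonnegative terms, each of the form (constant)$/\sqrt{N}$. It therefore suffices to make each term at most $\epsilon/2$, or more loosely $\gO(\epsilon)$; the total is then $\gO(\epsilon)$.

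For the interior term, write $K_r \coloneqq 2\delta(B_\vw C_\vz C_1 + C_2)$, with $C_1, C_2$ exactly as in the theorem. Requiring $K_r/\sqrt{N_r} \le \epsilon/2$ is equivalent to $N_r \ge 4K_r^2/\epsilon^2 = 16\,\delta^2(B_\vw C_\vz C_1 + C_2)^2/\epsilon^2$. So any choice $N_r = \gO\!\left(\delta^2 (B_\vw C_\vz C_1 + C_2)^2/\epsilon^2\right)$ with a large enough absolute constant suffices.

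For the initial-condition term, set $K_0 \coloneqq 4\lambda_1\delta B_\va(B_\vw C_{\vz_0}L_\sigma + \abs{c_0})$. The same computation gives $N_0 \ge 4K_0^2/\epsilon^2 = 64\,\lambda_1^2\delta^2B_\va^2(B_\vw C_{\vz_0}L_\sigma+\abs{c_0})^2/\epsilon^2$, which is the stated $\gO(\cdot)$ form.

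Adding the two bounds shows that the right-hand side of Theorem~\ref{thm:NSRad} is at most $\epsilon$. We should also check that these constants never involve $d$ or the width $p$. This holds because $C_1, C_2, B_\vw, B_\va, C_\vz, C_{\vz_0}$ are given, width-free quantities defined in Definitions~\ref{def:collocation} and \ref{def:Wspace} and in the theorem, and this is what justifies the claimed dimension independence.

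There is no real obstacle, since the lemma is a direct algebraic inversion of the theorem. The only care needed is to treat the absolute constants ($16$ and $64$) as absorbed into $\gO$, and to note that $\epsilon>0$ is arbitrary.
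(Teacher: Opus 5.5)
Your proposal is correct and matches the intended argument: the paper gives no separate proof of this lemma, which is simply the inversion of the two $1/\sqrt{N}$ terms on the right-hand side of Theorem~\ref{thm:NSRad}, with your constants $16$ and $64$ absorbed into $\gO(\cdot)$. One small caution on your side remark: the resulting sample sizes are free of $d$ and $p$ only in the sense of having no explicit dependence, because natural data can make the constants grow with $d$ (for example, uniform points on $[0,1]^{d+1}$ give $C_\vz^2=(d+1)/3$), and several of the $B_{f_s}$, such as $B_{f_2}$, $B_{f_4}$ and $B_{f_5}$, bound sums over $m\in[d]$.
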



\begin{remark}
    From the above lemma, we can further infer that a suggested ratio between the collocation points and initial points is, $$\frac{N_r}{N_0} = \gO\left(\frac{(B_\vw C_\vz C_1 + C_2)^2}{\lambda_1^2 B_\va^2 (B_\vw C_{\vz_0} L_\sigma + \abs{c_0})^2}\right)$$
\end{remark}



\section{Generalization Bound for Solving the Navier-Stokes PDE (Theorem \ref{thm:NSRad})}\label{sec:NSproof}

\subsection{An Outline of the Proof Technique}

Towards proving Theorem \ref{thm:NSRad} we need five key lemmas to be proven. Using Lemma~\ref{lem:genbound}, we express the data-averaged worst-case generalization error in terms of the Rademacher complexities of the residual and initial condition loss components. We then expand the Rademacher complexities of these components using Lemma~\ref{lem:partialderivatives}, making them explicit in terms of the network parameters and derivatives of the activation function. Next, by applying Lemmas~\ref{lem:dabsvalremtanh}–\ref{lem:dcontract2}, we reduce each term to the Rademacher complexity of a corresponding linear function class. Finally, invoking standard bounds on the Rademacher complexity of linear functions, we obtain the desired generalization bounds.

In the following lemma, we show that the Rademacher complexity of the full PINN empirical risk, as defined in Definition~\ref{def:lossclass}, can be upper bounded independently in terms of the residual and boundary losses.
\begin{lemma}\label{lem:genbound}
    Continuing in the setup of Definition \ref{def:lossclass}, it follows that,
    \begin{align*}
        &\E_{\substack{(\vx_{ri},t_{ri}) \sim \gD_1 \forall i \in [N_r] \\ \vx_{0j} \sim \gD_2 \forall j\in[N_0]}} \Bigg[\sup_{h\in\gH} \Bigg( \frac{1}{N_r} \sum_{i=i}^{N_r} \ell_{res}(h,(\vx_{ri},t_{ri})) + \frac{1}{N_0} \sum_{j=i}^{N_0} \ell_{0}(h,\vx_{0j}) \\
        &\hspace{15.00em}- \left( \E_{(\vx_r,t_r) \sim \gD_1}[\ell_{res}(h,(\vx_{r},t_{r}))] + \E_{\vx_0 \sim \gD_2}[\ell_{0}(h,\vx_{0})] \right) \Bigg)\Bigg]\\
        &\leq 2\gR_{res} (\gH) + 2\gR_{0} (\gH)
    \end{align*}
    where,
    \begin{align*}
        \gR_{res} (\gH) &\coloneqq \E_{\substack{(\vx_{ri},t_{ri}) \sim D_1 \forall i \in [N_r]}}\left[ \E_{\epsilon_i\sim \{\pm 1\}} \left[ \sup_{h\in\gH} \left(\frac{1}{N_r} \sum_{i=i}^{N_r} \epsilon_i\ell_{res}(h,(\vx_{ri},t_{ri})) \right) \right] \right] \\
        \gR_0 (\gH) &\coloneqq \E_{\substack{\vx_{0j}' \sim \gD_2 \forall j\in[N_0]}} \left[\E_{\vepsilon \sim \{\pm 1\}^{N_0}} \left[ \sup_{h\in\gH} \left( \frac{1}{N_0} \sum_{j=i}^{N_0} \epsilon_j \ell_{0}(h,\vx_{0j}) \right)\right]\right]
    \end{align*}
\end{lemma}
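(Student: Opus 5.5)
The argument is standard symmetrization, applied separately to the residual and initial-condition parts.

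\textbf{Step 1 (split the supremum).} The supremum of a sum is at most the sum of the suprema. Hence the left-hand side is bounded by
\[
\E\Big[\sup_{h\in\gH}\Big(\tfrac{1}{N_r}\textstyle\sum_i \ell_{res}(h,(\vx_{ri},t_{ri})) - \E[\ell_{res}(h,\cdot)]\Big)\Big] + \E\Big[\sup_{h\in\gH}\Big(\tfrac{1}{N_0}\textstyle\sum_j \ell_{0}(h,\vx_{0j}) - \E[\ell_{0}(h,\cdot)]\Big)\Big].
\]
The residual term depends only on the $\gD_1$ samples and the initial-condition term only on the $\gD_2$ samples. The joint expectation therefore factors into separate expectations over each sample set.

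\textbf{Step 2 (ghost samples).} Take the residual term first. Introduce an independent ghost sample $(\vx_{ri}',t_{ri}')\sim\gD_1$ and write
\[
\E_{(\vx_r,t_r)}[\ell_{res}(h,(\vx_r,t_r))] = \E'\Big[\tfrac{1}{N_r}\textstyle\sum_i \ell_{res}(h,(\vx_{ri}',t_{ri}'))\Big].
\]
Since $\sup_h(-\E' X_h)\le \E'\sup_h(-X_h)$, Jensen's inequality moves $\E'$ outside the supremum. This bounds the term by
\[
\E\,\E'\Big[\sup_h \tfrac{1}{N_r}\textstyle\sum_i\big(\ell_{res}(h,\vz_i)-\ell_{res}(h,\vz_i')\big)\Big].
\]

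\textbf{Step 3 (Rademacher signs).} The pairs $(\vz_i,\vz_i')$ are i.i.d. and exchangeable. Swapping $\vz_i$ with $\vz_i'$ leaves the joint law unchanged and flips the sign of the $i$-th summand. We may therefore insert independent Rademacher signs $\epsilon_i$ without changing the expectation. Splitting the supremum again gives
\[
\E\,\E_\epsilon\Big[\sup_h\tfrac1{N_r}\textstyle\sum_i\epsilon_i\ell_{res}(h,\vz_i)\Big]+\E\,\E_\epsilon\Big[\sup_h\tfrac1{N_r}\textstyle\sum_i(-\epsilon_i)\ell_{res}(h,\vz_i')\Big].
\]
Because $-\epsilon$ has the same law as $\epsilon$ and $\vz'$ has the same law as $\vz$, each piece equals $\gR_{res}(\gH)$, so the residual term is at most $2\gR_{res}(\gH)$. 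The identical argument with $\gD_2$ samples gives $2\gR_0(\gH)$ for the initial-condition term.

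\textbf{Main obstacle.} There is no real difficulty here; the only care needed is measurability and integrability of the suprema. One can either assume it, as is standard (for instance by taking $\gH$ countable or separable), or note that the losses are finite for each fixed $h$. I would state this assumption briefly rather than belabor it.
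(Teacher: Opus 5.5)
Your proposal is correct and is essentially the paper's own symmetrization argument. You rewrite the population risks via ghost samples, pull the ghost expectation outside the supremum by Jensen, insert Rademacher signs by exchangeability, and split the supremum to obtain two copies of each Rademacher complexity. The only difference is the order of steps: you split the residual and initial-condition suprema before introducing ghost samples, whereas the paper splits them afterwards. Your measurability caveat is reasonable, and the paper leaves it implicit.
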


To express the residual loss in terms of its derivatives, we employ the following simplifications outlined below.

\begin{lemma}\label{lem:partialderivatives}
    The specific partial derivatives needed to expand the loss are given by:
    \begin{align*}
        \partial_{x_k} \gN_{\vw,p}(\vz) =\ip{\va_2\odot\vw_{x_k}}{\sigma'(\mW\vz)}, ~&(\partial_t\gN_{\vw,\vu}(\vz))_k =\ip{\va_{1k}\odot\vw_t}{\sigma'(\mW\vz)}\\
        \nabla\cdot\gN_{\vw,u}(\vz)=\sum_{m=1}^d\ip{\va_{1m}\odot\vw_{x_m}}{\sigma'(\mW\vz)},&((\gN_{\vw,\vu}\cdot\nabla)\gN_{\vw,\vu})_k =\sum_{m=1}^d(\ip{\va_{1m}}{\sigma(\mW\vz)})(\ip{\va_{1k}\odot\vw_{\vx_m}}{\sigma'(\mW\vz)})\\
        (\nabla^2\gN_{\vw,\vu}(\vz))_k&=\sum_{m=1}^d\ip{\va_{1k}\odot\vw_{x_m}\odot\vw_{x_m}}{\sigma''(\mW\vz)}
    \end{align*}
    where $\vz = (\vx, t)$.
\end{lemma}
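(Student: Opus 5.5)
The plan is a direct computation. Each derivative is obtained by differentiating the depth-2 architecture of Definition~\ref{def:nn} coordinate by coordinate, and the result is rewritten as an inner product against $\sigma'(\mW\vz)$ or $\sigma''(\mW\vz)$ using the column and row notation of Definition~\ref{def:Wspace}. The starting point is the scalar expansions
\[
(\gN_{\vw,u}(\vz))_k=\sum_{q=1}^p a_{1k,q}\,\sigma(\vw_q^\top\vz),\qquad \gN_{\vw,p}(\vz)=\sum_{q=1}^p a_{2,q}\,\sigma(\vw_q^\top\vz),
\]
where $\vz=(\vx,t)$. By the chain rule, $\partial_{x_m}\sigma(\vw_q^\top\vz)=W_{q,m}\,\sigma'(\vw_q^\top\vz)$ and $\partial_t\sigma(\vw_q^\top\vz)=W_{q,d+1}\,\sigma'(\vw_q^\top\vz)$. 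In vector form these read $\partial_{x_m}\sigma(\mW\vz)=\vw_{\vx_m}\odot\sigma'(\mW\vz)$ and $\partial_t\sigma(\mW\vz)=\vw_t\odot\sigma'(\mW\vz)$.

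I then read off the first-order terms. Since $\partial_{x_k}\gN_{\vw,p}=\sum_q a_{2,q}W_{q,k}\sigma'(\vw_q^\top\vz)$, this is $\ip{\va_2\odot\vw_{x_k}}{\sigma'(\mW\vz)}$. The same computation with $\va_{1k}$ and $\vw_t$ gives $(\partial_t\gN_{\vw,u})_k$. The divergence is $\sum_m\partial_{x_m}(\gN_{\vw,u})_m$, so applying the formula with $\va_{1m}$ and $\vw_{x_m}$ and summing over $m$ gives the stated expression. For the convective term, $((\gN_{\vw,u}\cdot\nabla)\gN_{\vw,u})_k=\sum_m(\gN_{\vw,u})_m\,\partial_{x_m}(\gN_{\vw,u})_k$. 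Substituting $(\gN_{\vw,u})_m=\ip{\va_{1m}}{\sigma(\mW\vz)}$ and $\partial_{x_m}(\gN_{\vw,u})_k=\ip{\va_{1k}\odot\vw_{\vx_m}}{\sigma'(\mW\vz)}$ yields the product form claimed.

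Finally, for the Laplacian I differentiate once more: $\partial_{x_m}^2\sigma(\vw_q^\top\vz)=W_{q,m}^2\,\sigma''(\vw_q^\top\vz)$, which uses assumption (i) of Theorem~\ref{thm:NSRad} that $\sigma$ is twice differentiable. Summing over $m$ against $a_{1k,q}$ gives $\sum_m\ip{\va_{1k}\odot\vw_{x_m}\odot\vw_{x_m}}{\sigma''(\mW\vz)}$.

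There is no real obstacle here. The only care needed is index bookkeeping: $\va_{1k}$ is the $k$-th row of $\mA_1$, which corresponds to output coordinate $k$, while $\vw_{x_m}$ is the $m$-th column of $\mW$, which corresponds to input coordinate $m$. I also need to keep the convective term in the order velocity component $m$ times derivative in direction $m$ of component $k$, and not transpose the roles of $m$ and $k$.
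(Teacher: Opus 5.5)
Your proposal is correct and follows essentially the same route as the paper. Both arguments write each output coordinate as $\sum_q a_{1k,q}\sigma(\vw_q^\top\vz)$, apply the chain rule for the first- and second-order partials, regroup the sums over hidden units as inner products against $\sigma'(\mW\vz)$ and $\sigma''(\mW\vz)$, and assemble the convective term as $\sum_m (\gN_{\vw,u})_m\,\partial_{x_m}(\gN_{\vw,u})_k$. Your explicit note that the Laplacian needs $\sigma$ to be twice differentiable is a small point the paper leaves implicit.
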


  Next we state a modification of the contraction principle stated in Lemma 5.12 in \cite{tengyucs229m}, rendering it suitable for our analysis.

\begin{lemma}\label{lem:dabsvalremtanh}(A Variation of Lemma 5.12 of \cite{tengyucs229m})
    Given a real valued function $f_\vw$ parameterized by $\vw$ and a data-set $\mZ = (\vz_1,\ldots,\vz_n)$ define, $f_\vw(\mZ) = (f_\vw(\vz_1),\dots ,f_\vw(\vz_n))$ and $g_\vw(\mZ) = (f_\vw(\vz_1)-c,\dots ,f_\vw(\vz_n)-c)$, where $c \in \R$ is some constant. Suppose that for any $\vepsilon \in \{\pm 1\}^n$, $\sup_\vw \ip{\vepsilon}{g_\vw(\mZ)} \geq 0$. Then,
    \begin{align}
        \E_{\vepsilon\sim\{\pm 1\}^n}\left[ \sup_\vw \abs{\ip{\vepsilon}{f_\vw(\mZ)}} \right] \leq 2 \E_{\vepsilon\sim\{\pm 1\}^n}\left[ \sup_\vw \ip{\vepsilon}{g_\vw(\mZ)} \right] + \abs{c}\sqrt{n}
    \end{align}
\end{lemma}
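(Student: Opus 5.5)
We prove the bound by splitting each inner product into the part involving $g_\vw$ and the constant offset, then handling the supremum of the $g_\vw$ part through the nonnegativity hypothesis. Fix $\vepsilon\in\{\pm1\}^n$. Since $f_\vw(\vz_i)=g_\vw(\vz_i)+c$ for every $i$, we have $\ip{\vepsilon}{f_\vw(\mZ)}=\ip{\vepsilon}{g_\vw(\mZ)}+c\sum_{i=1}^n\epsilon_i$. The triangle inequality then gives
\begin{align*}
\sup_\vw \abs{\ip{\vepsilon}{f_\vw(\mZ)}} \leq \sup_\vw \abs{\ip{\vepsilon}{g_\vw(\mZ)}} + \abs{c}\,\Bigl|\sum_{i=1}^n \epsilon_i\Bigr| .
\end{align*}
Taking $\E_\vepsilon$ and using Jensen's inequality, $\E_\vepsilon\bigl|\sum_i\epsilon_i\bigr|\leq \bigl(\E_\vepsilon(\sum_i\epsilon_i)^2\bigr)^{1/2}=\sqrt{n}$. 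This accounts for the $\abs{c}\sqrt{n}$ term, so it remains to show
\begin{align*}
\E_\vepsilon\Bigl[\sup_\vw \abs{\ip{\vepsilon}{g_\vw(\mZ)}}\Bigr]\leq 2\,\E_\vepsilon\Bigl[\sup_\vw \ip{\vepsilon}{g_\vw(\mZ)}\Bigr].
\end{align*}

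This is the standard absolute-value removal step. For each $\vepsilon$, write $\abs{x}=\max(x,-x)$ and bound the maximum of two quantities by their sum when both are nonnegative:
\begin{align*}
\sup_\vw \abs{\ip{\vepsilon}{g_\vw(\mZ)}} = \max\Bigl(\sup_\vw \ip{\vepsilon}{g_\vw(\mZ)},\ \sup_\vw \ip{-\vepsilon}{g_\vw(\mZ)}\Bigr) \leq \sup_\vw \ip{\vepsilon}{g_\vw(\mZ)} + \sup_\vw \ip{-\vepsilon}{g_\vw(\mZ)} .
\end{align*}
The hypothesis, applied to both $\vepsilon$ and $-\vepsilon$ (both lie in $\{\pm1\}^n$), guarantees that each supremum on the right is nonnegative, which is what makes the last inequality valid. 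We then take expectations. Since $-\vepsilon$ has the same distribution as $\vepsilon$, the two terms have equal expectation, and this yields the factor $2$.

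The only delicate point is exactly this nonnegativity requirement: $\max(a,b)\leq a+b$ fails if one of the suprema is negative. This is why the hypothesis is stated for every sign vector rather than in expectation, and we must make sure it is invoked pointwise for both $\vepsilon$ and $-\vepsilon$. Combining the two displays proves the lemma.
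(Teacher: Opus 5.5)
Your proof is correct and follows essentially the same route as the paper: the triangle inequality peels off the constant $c$ (with $\E_{\vepsilon}\bigl|\sum_i \epsilon_i\bigr| \le \sqrt{n}$), then the nonnegativity hypothesis for both $\vepsilon$ and $-\vepsilon$ removes the absolute value, and the symmetry of $\vepsilon$ gives the factor $2$. The only cosmetic difference is that the paper writes $|x| = \mathrm{ReLU}(x) + \mathrm{ReLU}(-x)$ and uses the hypothesis to drop the ReLU inside the supremum, whereas you use $|x| = \max(x,-x)$ together with $\max(a,b) \le a+b$ for $a,b \ge 0$; the two are equivalent.
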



We present two more contraction lemmas that will be instrumental in how we simplify the Rademacher complexity bounds in the subsequent analysis. The standard Talagrand's contraction (Eq. 4.20, \cite{talagrand1991Prob}), which applies to general Lipschitz functions, cannot be directly used for our loss function because the analysis in here leads to taking supremums of inner-products of pairs of functions which share variables. To address this, we introduce Lemma~\ref{lem:dcontract2} to handle the nonlinear component of the loss corresponding to the $(\vu \cdot \nabla)\vu$ term of the Navier-Stokes PDE defined in Equation~\ref{eq:dNS}, and Lemma~\ref{lem:dcontract1tanh} for the remaining components.



\begin{lemma}\label{lem:dcontract1tanh}
    Assume that $\phi : \R \rightarrow \R$ is an $L_\phi$-Lipschitz continuous, $\phi(0) = c$, for some constant $c \in \R$, and $\sum_{m=1}^{p} \abs{f(\mW)_m} \leq B$, where $f(\mW) : \R^{p \times d} \to \R^p$. Then,
    \begin{align*}
        \E_{\vepsilon \sim \{\pm 1\}^{N_r}} \left[ \sup_{\mW\in\gC} \left( \frac{1}{N_r} \sum_{i=1}^{N_r} \epsilon_i \ip{f(\mW)}{\phi(\mW \vz_i)} \right) \right] \leq \frac{2B L_{\phi}}{N_r} \E_{\vepsilon \sim \{\pm 1\}^{N_r}} \left[ \sup_{\bar{\vw}\in\textrm{RS}(\gC)} \sum_{i=1}^{N_r} \epsilon_i \ip{\bar{\vw}}{\vz_i} \right] + \frac{B\abs{c}}{\sqrt{N_r}}
    \end{align*}
\end{lemma}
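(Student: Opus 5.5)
The plan is to turn the sum over $p$ neurons into an $\ell_1$-weighted convex combination, which reduces the supremum to a single neuron. Then Lemma~\ref{lem:dabsvalremtanh} handles the absolute value, and Talagrand's scalar contraction strips off $\phi$.

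\textbf{Step 1: reduce to a single neuron.} For fixed $\vepsilon$ and $\mW\in\gC$, write
\[
\sum_{i=1}^{N_r}\epsilon_i\ip{f(\mW)}{\phi(\mW\vz_i)}=\sum_{q=1}^p f(\mW)_q\sum_{i=1}^{N_r}\epsilon_i\,\phi(\vw_q^\top\vz_i).
\]
Since $\sum_q\abs{f(\mW)_q}\le B$, this is at most
\[
B\max_{q}\Bigl|\sum_i\epsilon_i\phi(\vw_q^\top\vz_i)\Bigr|\le B\sup_{\bar\vw\in\textrm{RS}(\gC)}\Bigl|\sum_i\epsilon_i\phi(\bar\vw^\top\vz_i)\Bigr|.
\]
Each row $\vw_q$ of an element of $\gC$ lies in $\textrm{RS}(\gC)$ by definition, which is what justifies the last inequality. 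Taking $\E_\vepsilon$ bounds the left side by
\[
\frac{B}{N_r}\,\E_\vepsilon\sup_{\bar\vw}\Bigl|\sum_i\epsilon_i\phi(\bar\vw^\top\vz_i)\Bigr|.
\]

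\textbf{Step 2: remove the absolute value.} Apply Lemma~\ref{lem:dabsvalremtanh} with $f_{\bar\vw}(\vz)=\phi(\bar\vw^\top\vz)$ and constant $c=\phi(0)$, so that $g_{\bar\vw}(\vz)=\phi(\bar\vw^\top\vz)-\phi(0)$. Its hypothesis $\sup_{\bar\vw}\ip{\vepsilon}{g_{\bar\vw}(\mZ)}\ge0$ holds whenever $\bar\vw=\bm{0}$ is admissible, since then $g\equiv 0$. The zero row belongs to $\textrm{RS}(\gC)$ because $\mW=0$ satisfies every constraint in Equation~\ref{eq:fbounds}. If one insists on only using rows of the given class, I would instead adjoin $\bm 0$ to $\textrm{RS}(\gC)$, which only enlarges the linear supremum on the right-hand side. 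This yields
\[
\E_\vepsilon\sup_{\bar\vw}\Bigl|\sum_i\epsilon_i\phi(\bar\vw^\top\vz_i)\Bigr|\le 2\,\E_\vepsilon\sup_{\bar\vw}\sum_i\epsilon_i\psi(\bar\vw^\top\vz_i)+\abs{c}\sqrt{N_r},
\qquad \psi(x):=\phi(x)-\phi(0).
\]

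\textbf{Step 3: contract.} The function $\psi$ is $L_\phi$-Lipschitz with $\psi(0)=0$. Talagrand's contraction (Eq.~4.20 of \cite{talagrand1991Prob}, or Lemma 5.12 of \cite{tengyucs229m} without absolute values) gives
\[
\E_\vepsilon\sup_{\bar\vw}\sum_i\epsilon_i\psi(\bar\vw^\top\vz_i)\le L_\phi\,\E_\vepsilon\sup_{\bar\vw}\sum_i\epsilon_i\ip{\bar\vw}{\vz_i}.
\]
Combining the three steps gives
\[
\frac{2BL_\phi}{N_r}\,\E_\vepsilon\sup_{\bar\vw}\sum_i\epsilon_i\ip{\bar\vw}{\vz_i}+\frac{B\abs{c}\sqrt{N_r}}{N_r},
\]
which is exactly the claimed bound, since $\sqrt{N_r}/N_r=1/\sqrt{N_r}$.

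\textbf{Main obstacle.} Step 1 is the delicate point. The supremum over $\mW$ couples $f(\mW)$ with the rows $\vw_q$, so one cannot take suprema separately; the $\ell_1$-budget on $f(\mW)$ is what decouples them. The price is the absolute value, which is why Lemma~\ref{lem:dabsvalremtanh} is needed and why the constant-shift term $B\abs{c}/\sqrt{N_r}$ appears. Checking that lemma's nonnegativity hypothesis through admissibility of the zero row is the other point needing care.
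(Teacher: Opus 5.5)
Your proposal is correct and follows the paper's proof essentially step for step: the $\ell_1$-budget reduction to a single neuron in $\textrm{RS}(\gC)$, then Lemma~\ref{lem:dabsvalremtanh} with $c=\phi(0)$ (using the zero row for its nonnegativity hypothesis), then Talagrand's contraction on $\phi-\phi(0)$. Your explicit check that $\mW=0$ satisfies every constraint in Equation~\ref{eq:fbounds}, so that $\bm{0}\in\textrm{RS}(\gC)$, is a useful detail that the paper only asserts.
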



\begin{lemma}\footnote{The LLM Claude was used in some parts of the proof of Lemma~\ref{lem:dcontract2}}\label{lem:dcontract2}
    Assume that $\phi_1 : \R \rightarrow \R$ and $\phi_2 : \R \to \R$ are Lipschitz continuous function with Lipschitz constants $L_{\phi_1}$ and $L_{\phi_2}$ respectively, and $\phi_1(0) \cdot \phi_2(0) = k$, for some constant $k \in \R$. Let $\phi_1 \leq B_{\phi_1}$, $\phi_2 \leq B_{\phi_2}$ and let there be $d$ functions $f_{m},m\in[d]$,$f_m(\mW) : \R^{p \times d} \to \R^p$ such that $\sum_{m=1}^d \sum_{q_1,q_2 = 1}^p \abs{f_m(\mW)_{q_1}} \abs{\va_{m,q_2}} \leq B$. Then, 
    \begin{align}
        &\E_{\substack{\vepsilon \sim \{\pm 1\}^{N_r}}} \left[ \sup_{\mW\in\gC} \left(\frac{1}{N_r} \sum_{i=1}^{N_r} \sum_{m=1}^d\epsilon_i \ip{f_m(\mW)}{\phi_1 (\mW \vz_i)} \ip{\va_m}{\phi_2(\mW \vz_i)} \right)\right] \nonumber\\
        &\leq \frac{4B(B_{\phi_1} L_{\phi_2} + B_{\phi_2} L_{\phi_1})}{N_r}\, \mathbb{E}_{\epsilon}\left[\sup_{\bar{\vw} \in \mathrm{RS}(C)} \sum_{i=1}^{N_r} \epsilon_i \langle \bar{\vw}, \vz_i\rangle\right] + \frac{B(2B_{\phi_2}|\phi_1(0)| + |k|)}{\sqrt{N_r}}.
    \end{align}
\end{lemma}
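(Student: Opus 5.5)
The plan is to reduce the product term to two single-factor contraction problems of the type handled in Lemma~\ref{lem:dcontract1tanh}. The first step is to rewrite each summand as a double sum over hidden units:
\[
\ip{f_m(\mW)}{\phi_1(\mW\vz_i)}\ip{\va_m}{\phi_2(\mW\vz_i)} = \sum_{q_1,q_2=1}^p f_m(\mW)_{q_1}\, a_{m,q_2}\, \phi_1(\vw_{q_1}^\top\vz_i)\,\phi_2(\vw_{q_2}^\top\vz_i).
\]
The coefficients $f_m(\mW)_{q_1}a_{m,q_2}$ form a weight vector indexed by $(m,q_1,q_2)$, whose $\ell_1$ norm is at most $B$ by hypothesis. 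For each fixed $\mW\in\gC$ the expression is therefore a convex combination, scaled by $B$, of the functions $\pm\psi_{q_1,q_2}(\vz) := \pm\phi_1(\vw_{q_1}^\top\vz)\phi_2(\vw_{q_2}^\top\vz)$. A supremum of a linear functional over such a hull is attained at an extreme point. Hence
\[
\E_\vepsilon\sup_{\mW} \frac1{N_r}\sum_i\epsilon_i(\cdot) \le \frac{B}{N_r}\,\E_\vepsilon \sup_{\vu,\vv\in \mathrm{RS}(\gC)}\Big|\sum_i \epsilon_i \phi_1(\vu^\top\vz_i)\phi_2(\vv^\top\vz_i)\Big|.
\]
Here $\vu,\vv$ are allowed to be different rows, so the shared-variable issue disappears at the cost of an absolute value.

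The second step removes the absolute value and the constant offset using Lemma~\ref{lem:dabsvalremtanh} with $c=k=\phi_1(0)\phi_2(0)$. Its hypothesis $\sup\ip{\vepsilon}{g}\ge 0$ is checked by choosing $\vu=\vv=0$, which makes $g\equiv 0$. This produces a factor $2$ on the centered quantity and an additive term $|k|\sqrt{N_r}$, which after scaling by $B/N_r$ gives $B|k|/\sqrt{N_r}$.

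The third step splits the centered product by adding and subtracting:
\[
\phi_1(u)\phi_2(v)-\phi_1(0)\phi_2(0)=\phi_2(v)\big(\phi_1(u)-\phi_1(0)\big)+\phi_1(0)\big(\phi_2(v)-\phi_2(0)\big).
\]
The supremum of the sum is bounded by the sum of the suprema, with $\vu$ and $\vv$ now decoupled.
\begin{itemize}
\item The second piece is handled by Talagrand contraction with Lipschitz constant $L_{\phi_2}$, scaled by $|\phi_1(0)|$. This gives $|\phi_1(0)| L_{\phi_2}$ times the linear complexity.
\item Alternatively, $|\phi_1(0)|$ can be absorbed into an offset term bounded crudely by $B_{\phi_2}|\phi_1(0)|\sqrt{N_r}$. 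That route would produce the $2B_{\phi_2}|\phi_1(0)|$ constant in the statement, so I would track which route reproduces the paper's constants.
\end{itemize}

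The first piece, $\phi_2(\vv^\top\vz)\big(\phi_1(\vu^\top\vz)-\phi_1(0)\big)$, is the main obstacle, because it is a genuine product of two parametrized functions. I would handle it by viewing the map $(s,r)\mapsto \phi_2(r)(\phi_1(s)-\phi_1(0))$ as Lipschitz in each argument on the relevant range. It has constant $B_{\phi_2}L_{\phi_1}$ in $s$ and $B_{\phi_1}L_{\phi_2}$ in $r$, using the boundedness $|\phi_1|\le B_{\phi_1}$. I would then apply a vector-valued contraction (Maurer's inequality, or a direct comparison argument of the Ledoux--Talagrand type proven by conditioning on $\epsilon_2,\dots,\epsilon_n$ and peeling one coordinate at a time). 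This bounds it by $(B_{\phi_2}L_{\phi_1}+B_{\phi_1}L_{\phi_2})\,\E\sup_{\vu,\vv}\sum_i\epsilon_i(\vu^\top\vz_i+\vv^\top\vz_i)$, which is at most twice the linear Rademacher term over $\mathrm{RS}(\gC)$.

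Collecting the factors $B$, $2$ from the absolute-value removal, and $2$ from the two decoupled linear suprema gives the leading coefficient $4B(B_{\phi_1}L_{\phi_2}+B_{\phi_2}L_{\phi_1})/N_r$. The constant terms give $B(2B_{\phi_2}|\phi_1(0)|+|k|)/\sqrt{N_r}$. The delicate point is making the peeling argument rigorous without losing more than these constants.
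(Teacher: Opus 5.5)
Your first two steps coincide with the paper's: the $\ell_1$ bound on the coefficients $f_m(\mW)_{q_1}a_{m,q_2}$ decouples the rows into $\vu,\vv$, and Lemma~\ref{lem:dabsvalremtanh} is then applied with $c=k$, which is legitimate since $\mW=0\in\gC$. The routes diverge on the product. The paper freezes a reference row $\vw_2^*$ and splits $\phi_1(\bar\vw_1)\phi_2(\bar\vw_2)=\phi_1(\bar\vw_1)[\phi_2(\bar\vw_2)-\phi_2(\vw_2^*)]+\phi_1(\bar\vw_1)\phi_2(\vw_2^*)$. It pulls out $B_{\phi_1}$ and $B_{\phi_2}$ respectively, and uses scalar contractions on each piece. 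The offset $2B_{\phi_2}|\phi_1(0)|$ comes from the frozen piece. You instead contract the product as a function of the pair of linear forms, which is the more robust idea. In the paper's first piece the multipliers $\phi_1(\langle\bar\vw_1,\vz_i\rangle)$ depend on the supremum variable, and the contraction principle only allows fixed multipliers there. A vector contraction handles that coupling honestly.

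As written, however, your step~3 does not deliver the stated constants, and the final paragraph asserts them rather than derives them.

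(i) The centered factor $\phi_2(r)\bigl(\phi_1(s)-\phi_1(0)\bigr)$ has Lipschitz constant $\sup_s|\phi_1(s)-\phi_1(0)|\,L_{\phi_2}$ in $r$. This is only bounded by $2B_{\phi_1}L_{\phi_2}$, not $B_{\phi_1}L_{\phi_2}$.

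(ii) The piece $\phi_1(0)\bigl(\phi_2(\vv^\top\vz)-\phi_2(0)\bigr)$ contributes an extra $|\phi_1(0)|L_{\phi_2}$ times the linear complexity. That term is absent from the statement and not comparable to $B_{\phi_2}|\phi_1(0)|\sqrt{N_r}$. Your ``crude offset'' alternative is not available, because this piece is still a supremum over $\vv$. In the paper the $\sqrt{N_r}$ offset appears only because $\vw_2^*$ is frozen.

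(iii) Maurer's inequality costs $\sqrt2$ times the $\ell_2$-Lipschitz constant, so getting exactly $(a+b)$ per linear supremum requires rescaling. Set $a=B_{\phi_2}L_{\phi_1}$ and $b=B_{\phi_1}L_{\phi_2}$. Use the feature map $\vz\mapsto(a\,\vu^\top\vz,\,b\,\vv^\top\vz)$ and $\tilde h(s',r')=h(s'/a,r'/b)$, which is $\sqrt2$-Lipschitz in $\ell_2$. The factor $2=\sqrt2\cdot\sqrt2$ is exactly the price of the vector contraction, so an unspecified coordinatewise peeling should not be relied on to get it.

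The repair is to drop the decomposition altogether. Since $\E_{\vepsilon}\sum_i\epsilon_i k=0$ and Maurer's inequality needs no centering, apply its rescaled form directly to $h(s,r)=\phi_1(s)\phi_2(r)$, whose partial Lipschitz constants are exactly $a$ and $b$. Because $\vu,\vv$ range independently over $\mathrm{RS}(\gC)$, this gives
\[
\E_{\vepsilon}\sup_{\vu,\vv\in\mathrm{RS}(\gC)}\sum_{i=1}^{N_r}\epsilon_i\,\phi_1(\vu^\top\vz_i)\,\phi_2(\vv^\top\vz_i)\le 2\,(a+b)\,\E_{\vepsilon}\sup_{\bar\vw\in\mathrm{RS}(\gC)}\sum_{i=1}^{N_r}\epsilon_i\ip{\bar\vw}{\vz_i}.
\]
Combined with your steps 1 and 2, this yields the leading coefficient $4B(B_{\phi_1}L_{\phi_2}+B_{\phi_2}L_{\phi_1})/N_r$ with additive term only $B|k|/\sqrt{N_r}$. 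That implies the lemma as stated. When $\phi_1(0)=0$, as for $\tanh$ and $\tanh^3$, your step~3 already reduces to this.
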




The proofs of the above lemmas are provided in Section~\ref{sec:NSlemproof}.

Building on the preceding lemmas, we proceed to derive the upper bound established in Theorem~\ref{thm:NSRad}.

\subsection{Proof of Theorem \ref{thm:NSRad}}

\begin{proof}
    First we can bound the generalization error by the Rademacher expression of the empirical loss as given by Lemma \ref{lem:genbound}, giving us
    \begin{align*}
       &\E_{\substack{(\vx_{ri},t_{ri}) \sim \gD_1 \forall i \in [N_r] \\ \vx_{0j} \sim \gD_2 \forall j\in[N_0]}} \Bigg[\sup_{h\in\gH} \Bigg( \frac{1}{N_r} \sum_{i=i}^{N_r} \ell_{res}(h,(\vx_{ri},t_{ri})) + \frac{1}{N_0} \sum_{j=i}^{N_0} \ell_{0}(h,\vx_{0j}) \\
        &\hspace{14.00em}- \left( \E_{(\vx_r,t_r) \sim \gD_1}[\ell_{res}(h,(\vx_{r},t_{r}))] + \E_{\vx_0 \sim \gD_2}[\ell_{0}(h,\vx_{0})] \right) \Bigg)\Bigg]\\
        &\leq 2\gR_{res} (\gH) + 2\gR_{0} (\gH)
    \end{align*}

    where,
    \begin{align*}
        \gR_{res} (\gH) &\coloneqq \E_{\substack{(\vx_{ri},t_{ri}) \sim D_1 \forall i \in [N_r]}}\left[ \E_{\epsilon_i\sim \{\pm 1\}} \left[ \sup_{h\in\gH} \left(\frac{1}{N_r} \sum_{i=i}^{N_r} \epsilon_i\ell_{res}(h,(\vx_{ri},t_{ri})) \right) \right] \right] \\
        \gR_0 (\gH) &\coloneqq \E_{\substack{\vx_{0j}' \sim \gD_2 \forall j\in[N_0]}} \left[\E_{\vepsilon \sim \{\pm 1\}^{N_0}} \left[ \sup_{h\in\gH} \left( \frac{1}{N_0} \sum_{j=i}^{N_0} \epsilon_j \ell_{0}(h,\vx_{0j}) \right)\right]\right]
    \end{align*}

    By Definition \ref{def:lossclass}, we substitute the definition of $\ell_{res}$ and $\ell_0$ and the partial derivatives given by Lemma \ref{lem:partialderivatives} to get,
    \begin{align*}
        \gR_{res} &= \E_{\substack{(\vz_i\sim D_1)\\\forall i\in[N_r]}}\Biggl[\E_{\vepsilon\sim\{\pm1\}^{N_r}}\Biggl[\sup_{\mW\in\gC}\biggl(\frac{1}{N_r}\sum_{i=1}^{N_r}\epsilon_i\Bigl\{\sum_{k=1}^d\ell\Bigl(\ip{\va_{1k}\odot\vw_{t}}{\sigma'(\mW\vz)} +\sum_{m=1}^d(\ip{\va_{1m}}{\sigma(\mW\vz_i)})(\ip{\va_{1k}\odot\vw_{\vx_m}}{\sigma'(\mW\vz_i)}) \nonumber\\
        &\hspace{7.00em}+ \ip{\va_{2}\odot\vw_{x_k}}{\sigma'(\mW\vz_i)}-\nu\sum_{m=1}^d\ip{\va_{1k}\odot\vw_{x_m}\odot\vw_{x_m}}{\sigma''(\mW\vz_i)}\Bigr) + \lambda_0\ell\Bigl(\sum_{m=1}^d\ip{\va_{1m}\odot\vw_{x_m}}{\sigma'(\mW\vz_i)}\Bigr)\Bigr\}\biggr)\Biggr]\Biggr],
    \end{align*}
    where $\vz_i = (\vx_{ri}, t_{ri})$.
    \begin{align*}
        \gR_0 = \lambda_1 \E_{\substack{\vx_{0j}' \sim \gD_2 \forall j\in[N_0]}} \left[\E_{\vepsilon \sim \{\pm 1\}^{N_0}} \left[ \sup_{\mW\in\gC} \left( \frac{1}{N_0} \sum_{j=i}^{N_0} \sum_{k=1}^d\epsilon_j \ell \left ( (\gN_{\vw,u}(\vx_{0j},0))_k - (\vf_0 (\vx_{0j}))_k \right ) \right)\right] \right],
    \end{align*}
    
    Firstly we focus on $\gR_{res}$ and we rewrite as the following sum splitting across the regularization for the divergenceless condition.
    
    \begin{align*}
        \gR_{res} &= \E_{\substack{(\vz_i\sim D_1)\\\forall i\in[N_r]}}\Biggl[\E_{\vepsilon\sim\{\pm1\}^{N_r}}\Biggl[\sup_{\mW\in\gC}\biggl(\frac{1}{N_r}\sum_{i=1}^{N_r}\epsilon_i\Bigl\{\sum_{k=1}^d\ell\Bigl(\ip{\va_{1k}\odot\vw_{t}}{\sigma'(\mW\vz)} +\sum_{m=1}^d(\ip{\va_{1m}}{\sigma(\mW\vz_i)})(\ip{\va_{1k}\odot\vw_{\vx_m}}{\sigma'(\mW\vz_i)}) \nonumber\\
        &\hspace{7.00em}+ \ip{\va_{2}\odot\vw_{x_k}}{\sigma'(\mW\vz_i)}-\nu\sum_{m=1}^d\ip{\va_{1k}\odot\vw_{x_m}\odot\vw_{x_m}}{\sigma''(\mW\vz_i)}\Bigr)\Bigr\}\biggr)\Biggr]\Biggr] \\
        &\hspace{2.00em} + \E_{\substack{(\vz_i\sim D_1)\\\forall i\in[N_r]}}\Biggl[\E_{\vepsilon\sim\{\pm1\}^{N_r}}\Biggl[\sup_{\mW\in\gC}\biggl(\frac{1}{N_r}\sum_{i=1}^{N_r}\epsilon_i\Bigl\{\lambda_0\ell\Bigl(\sum_{m=1}^d\ip{\va_{1m}\odot\vw_{x_m}}{\sigma'(\mW\vz_i)}\Bigr)\Bigr\}\biggr)\Biggr]\Biggr]
    \end{align*}
    Using the assumption from Definition \ref{def:nn} that the loss is $L_\ell$-Lipschitz, we can use Talagrand's contraction lemma on each of the terms separately to get,
    \begin{align}\label{eq:Rres}
        \gR_{res}&\leq L_\ell \E_{\substack{(\vz_i\sim D_1)\\\forall i\in[N_r]}}\Biggl[\E_{\vepsilon\sim\{\pm1\}^{N_r}}\Biggl[\sup_{\mW\in\gC}\biggl(\frac{1}{N_r}\sum_{i=1}^{N_r}\epsilon_i\Bigl\{\sum_{k=1}^d\Bigl(\ip{\va_{1k}\odot\vw_{t}}{\sigma'(\mW\vz_i)} \nonumber\\
        &\hspace{14.00em}+\sum_{m=1}^d(\ip{\va_{1m}}{\sigma(\mW\vz_i)})(\ip{\va_{1k}\odot\vw_{\vx_m}}{\sigma'(\mW\vz_i)})\nonumber\\
        &\hspace{14.00em}+\ip{\va_{2}\odot\vw_{x_k}}{\sigma'(\mW\vz_i)}-\nu\sum_{m=1}^d\ip{\va_{1k}\odot\vw_{x_m}\odot\vw_{x_m}}{\sigma''(\mW\vz_i)}\Bigr)\Bigr\}\biggr)\Biggr]\Biggr]\nonumber\\
        &+ L_\ell \E_{\substack{(\vz_i\sim D_1)\\\forall i\in[N_r]}}\Biggl[\E_{\vepsilon\sim\{\pm1\}^{N_r}}\Biggl[\sup_{\mW\in\gC}\biggl(\frac{1}{N_r}\sum_{i=1}^{N_r}\epsilon_i\Bigl\{\lambda_0\sum_{m=1}^d\ip{\va_{1m}\odot\vw_{x_m}}{\sigma'(\mW\vz_i)}\Bigr\}\biggr)\Biggr]\Biggr]
    \end{align}


 Observing the structure of the above we define the following five quantities.
 
    \subsection*{Divide and Conquer}
    Let  $\gR_1,\gR_2,\gR_3,\gR_4,\gR_5$ be such that
    \begin{align*}
        \gR_1&=\E_{\vepsilon\sim\{\pm1\}^{N_r}}\Biggl[\sup_{\mW\in\gC}\biggl(\frac{1}{N_r}\sum_{i=1}^{N_r}\epsilon_i\ip{\va_{1}\odot\vw_{t}}{\sigma'(\mW\vz_i)}\biggr)\Biggr]=\E_{\vepsilon\sim\{\pm1\}^{N_r}}\Biggl[\sup_{\mW\in\gC}\biggl(\frac{1}{N_r}\sum_{i=1}^{N_r}\epsilon_i\ip{f_1(\mW)}{\sigma'(\mW\vz_i)}\biggr)\Biggr]\\
        \gR_2&=\E_{\vepsilon\sim\{\pm1\}^{N_r}}\Biggl[\sup_{\mW\in\gC}\biggl(\frac{1}{N_r}\sum_{i=1}^{N_r}\sum_{k=1}^d\sum_{m=1}^d\epsilon_i\ip{\va_{1m}}{\sigma(\mW\vz_i)}\ip{\va_{1k}\odot\vw_{\vx_m}}{\sigma'(\mW\vz_i)}\biggr)\Biggr]\\
        &=\E_{\vepsilon\sim\{\pm1\}^{N_r}}\Biggl[\sup_{\mW\in\gC}\biggl(\frac{1}{N_r}\sum_{i=1}^{N_r}\sum_{m=1}^d\epsilon_i\ip{f_{2m}(\mW)}{\sigma'(\mW\vz_i)}\ip{\va_{1m}}{\sigma(\mW\vz_i)}\biggr)\Biggr]&\\
        \gR_3&=\E_{\vepsilon\sim\{\pm1\}^{N_r}}\Biggl[\sup_{\mW\in\gC}\biggl(\frac{1}{N_r}\sum_{i=1}^{N_r}\epsilon_i\ip{\va_{2}\odot\vw_\vx}{\sigma'(\mW\vz_i)}\biggr)\Biggr]=\E_{\vepsilon\sim\{\pm1\}^{N_r}}\Biggl[\sup_{\mW\in\gC}\biggl(\frac{1}{N_r}\sum_{i=1}^{N_r}\epsilon_i\ip{f_3(\mW)}{\sigma'(\mW\vz_i)}\biggr)\Biggr]\\
        \gR_4
        &=\E_{\vepsilon\sim\{\pm1\}^{N_r}}\Biggl[\sup_{\mW\in\gC}\biggl(\frac{-\nu}{N_r}\sum_{i=1}^{N_r}\sum_{m=1}^d\epsilon_i\ip{\va_{1}\odot\vw_{x_m}\odot\vw_{x_m}}{\sigma''(\mW\vz_i)}\biggr)\Biggr]=\E_{\vepsilon\sim\{\pm1\}^{N_r}}\Biggl[\sup_{\mW\in\gC}\biggl(\frac{-\nu}{N_r}\sum_{i=1}^{N_r}\epsilon_i\ip{f_4(\mW)}{\sigma''(\mW\vz_i)}\biggr)\Biggr]\\
        \gR_5
        &=\E_{\vepsilon\sim\{\pm1\}^{N_r}}\Biggl[\sup_{\mW\in\gC}\biggl(\frac{\lambda_0}{N_r}\sum_{i=1}^{N_r}\sum_{m=1}^d\epsilon_i\ip{\va_{1m}\odot\vw_{x_m}}{\sigma'(\mW\vz_i)}\biggr)\Biggr]=\E_{\vepsilon\sim\{\pm1\}^{N_r}}\Biggl[\sup_{\mW\in\gC}\biggl(\frac{\lambda_0}{N_r}\sum_{i=1}^{N_r}\epsilon_i\ip{f_5(\mW)}{\sigma'(\mW\vz_i)}\biggr)\Biggr]
    \end{align*}     
        
Hence we can rewrite equation \ref{eq:Rres} to get,

    \begin{align*} 
        \gR_{res}&\leq L_\ell\E_{\substack{(\vx_{ri},t_{ri}\sim D_1)\\\forall i\in[N_r]}}\Biggl[\gR_1+\gR_2+\gR_3+\gR_4+\gR_5\Biggr]&
    \end{align*}


    \textbf{Bounding Terms 1, 3 and 5}

    We recall the Lipschitz constants $L_{\sigma'}$ for $\sigma'(\cdot)$ and $\sigma'(0) = c_1$. Therefore, for $\gR_1, \gR_3$, and $\gR_5$, we can invoke Lemma \ref{lem:dcontract1tanh}, with $\phi(\cdot) = \sigma'(\cdot)$, and recalling the definition of $B_{f_1}, B_{f_3}$ and $B_{f_5}$ from Definition~\ref{def:Wspace} to get
    \begin{align*}
        \gR_{1} &\leq \frac{2 B_{f_1} L_{\sigma'}}{N_r} \E_{\substack{\vepsilon \sim \{\pm 1\}^{N_r}}} \left[ \sup_{\bar{\vw}\in\textrm{RS}(\gC)} \sum_{i=1}^{N_r} \epsilon_i \ip{\bar{\vw}}{\vz_i} \right ] + \frac{B_{f_1} \abs{c_1}}{\sqrt{N_r}} \\
        \gR_{3} &\leq \frac{2 B_{f_3} L_{\sigma'}}{N_r} \E_{\substack{\vepsilon \sim \{\pm 1\}^{N_r}}} \left[ \sup_{\bar{\vw}\in\textrm{RS}(\gC)} \sum_{i=1}^{N_r} \epsilon_i \ip{\bar{\vw}}{\vz_i} \right ] + \frac{B_{f_3} \abs{c_1}}{\sqrt{N_r}}\\
        \gR_{5} &\leq \frac{2 \lambda_0 B_{f_5} L_{\sigma'}}{N_r} \E_{\substack{\vepsilon \sim \{\pm 1\}^{N_r}}} \left[ \sup_{\bar{\vw}\in\textrm{RS}(\gC)} \sum_{i=1}^{N_r} \epsilon_i \ip{\bar{\vw}}{\vz_i} \right] + \frac{B_{f_5} \abs{c_1}}{\sqrt{N_r}}
    \end{align*}

    
    \textbf{Bounding Term 2}

    Since $\sigma(\cdot)$ is $L_{\sigma}$-Lipschitz, $\sigma(\cdot) \leq B_{\sigma}$, $\sigma'(\cdot)$ is $L_{\sigma'}$-Lipschitz, $\sigma'(\cdot) \leq B_{\sigma'}$, and $\sigma(0) \cdot \sigma'(0) = c_0 c_1$ , we can apply Lemma \ref{lem:dcontract2} with, $\phi_1(\cdot) = \sigma(\cdot)$, $\phi_2(\cdot) = \sigma'(\cdot)$ and $k = c_0 c_1$ , and recalling the definition of $B_{f_2}$ from Definition~\ref{def:Wspace} to get
    \begin{align*}
        \gR_2 \leq \frac{4B_{f_2}(B_{\sigma} L_{\sigma'} + B_{\sigma'} L_{\sigma})}{N_r}\, \mathbb{E}_{\epsilon}\left[\sup_{\bar{w} \in RS(C)} \sum_{i=1}^{N_r} \epsilon_i \langle \bar{\vw}, \vz_i\rangle\right] + \frac{B_{f_2}(2B_{\sigma'}|c_0| + |c_0 c_1|)}{\sqrt{N_r}}
    \end{align*}

    
    \textbf{Bounding Term 4}
    
    For the 4th term, since we are taking the expectation for $\epsilon_i \sim \{\pm 1\}$, we can simplify $\gR_4$ as
    \begin{align*}
        \gR_4 =\E_{\vepsilon\sim\{\pm1\}^{N_r}}\Biggl[\sup_{\mW\in\gC}\biggl(\frac{\nu}{N_r}\sum_{i=1}^{N_r}\epsilon_i\ip{f_4(\mW)}{\sigma''(\mW\vz_i)}\biggr)\Biggr].
    \end{align*}
    We recall the Lipschitz constants $L_{\sigma''}$ for $\sigma''(\cdot)$ and $\sigma''(0) = c_2$. Therefore, for $\gR_4$, we can invoke Lemma \ref{lem:dcontract1tanh} with $\phi(\cdot) = \sigma''(\cdot)$ and recalling the definition of $B_{f_4}$ from Definition~\ref{def:Wspace} to get 

    \begin{align*}
        \gR_{4} &\leq \frac{2\nu B_{f_4} L_{\sigma''}}{N_r} \E_{\substack{\vepsilon \sim \{\pm 1\}^{N_r}}} \left[ \sup_{\bar{\vw} \in \textrm{RS}(\gC)} \sum_{i=1}^{N_r} \epsilon_i \ip{\bar{\vw}}{\vz_i} \right ] + \frac{\nu B_{f_4} \abs{c_2}}{\sqrt{N_r}}.
    \end{align*}
    
    \subsection*{Combining the Bounds on Terms 1-5}
    \begin{align*}
        \gR_{res} &\leq L_\ell ( \gR_1 + \gR_2 + \gR_3 + \gR_4 + \gR_5 )\\
        &\leq L_\ell\left(\frac{2B_{f_1}L_{\sigma'}+4B_{f_2}(B_\sigma L_{\sigma'}+B_{\sigma'}L_\sigma)+2B_{f_3}L_{\sigma'}+2\nu B_{f_4}(L_{\sigma'}+L_{\sigma})+2 \lambda_0 B_{f_5}L_{\sigma'}}{N_r}\right)\times \E_{\vepsilon\sim\{\pm1\}^{N_r}}\left[\sup_{\bar{\vw}\in\textrm{RS}(\gC)}\sum_{i=1}^{N_r}\epsilon_i\ip{\bar{\vw}}{\vz_i}\right] \\
        &\hspace{2.00em}+ \frac{L_\ell (B_{f_1}\abs{c_1} + B_{f_2}(2B_{\sigma'}\abs{c_0} + \abs{c_0 c_1}) + B_{f_3}\abs{c_1} + \nu B_{f_4}c_2 + \lambda_0 B_{f_5}\abs{c_1})}{\sqrt{N_r}}
    \end{align*}

        

    So, we are finally left with a Rademacher complexity of a linear model
    \begin{align*}
        \hat{\gR}_{res} \leq \frac{L_\ell C_1}{N_r} \E_{\substack{\vepsilon \sim \{\pm 1\}^{N_r}}} \left[ \sup_{\bar{\vw}\in\textrm{RS}(\gC)} \sum_{i=1}^{N_r} \epsilon_i \ip{\bar{\vw}}{\vz_i}\right] + \frac{L_\ell C_2}{\sqrt{N_r}},
    \end{align*}
    where $C_1 \coloneqq 2B_{f_1}L_{\sigma'}+4B_{f_2}(B_\sigma L_{\sigma'}+B_{\sigma'}L_\sigma)+2B_{f_3}L_{\sigma'}+2\nu B_{f_4}(L_{\sigma'}+L_{\sigma})+2 \lambda_0 B_{f_5}L_{\sigma'}$ and $C_2 \coloneqq  B_{f_1}\abs{c_1} + B_{f_2}(2B_{\sigma'}\abs{c_0} + \abs{c_0 c_1}) + B_{f_3}\abs{c_1} + \nu B_{f_4}\abs{c_2} + \lambda_0 B_{f_5}\abs{c_1}$.
    We recall that $\norm{\vw_q}_2 \leq B_{\vw}, \forall q \in [p]$, where $\vw_q = \mW_{q,:}$, and $\E_{\vz_i}[\norm{\vz_i}_2^2] \leq C_{\vz}^2, \forall i \in [N_r]$, where $\vz_i = (\vx_{ri}, t_{ri})$ from Definition~\ref{def:collocation}. Then using the standard Rademacher complexity bound for linear models (Theorem 5.5 of \cite{tengyucs229m}) we get,
    \begin{align*}
        \hat{\gR}_{res} \leq \frac{L_\ell B_{\vw} C_1}{N_r} \sqrt{\sum_{i=1}^{N_r} \norm{\vz_i}_2^2} + \frac{L_\ell C_2}{\sqrt{N_r}}
    \end{align*}
    and taking the expectation over the data gives us,
    \begin{align}
        \gR_{res} \leq \frac{L_\ell (B_{\vw} C_\vz C_1 + C_2)}{\sqrt{N_r}}.
    \end{align}

    \subsection*{Analysing the Initial Condition Enforcing Term}
    
    \begin{align*}
        \gR_0 = \lambda_1 \E_{\vepsilon \sim \{\pm 1\}^{N_0}} \left[ \sup_{\mW\in\gC} \left( \frac{1}{N_0} \sum_{j=i}^{N_0} \sum_{k=1}^d\epsilon_j \ell \left ( (\gN_{\vw,u}(\vx_{0j},0))_k - (\vf_0 (\vx_{0j}))_k \right ) \right)\right]
    \end{align*}
    
    Since the loss $\ell$ is Lipschitz with constant $L_\ell$ we can apply Talagrand's contraction to get
    \begin{align*}
        &\leq \frac{\lambda_1 L_\ell}{N_0} \E_{\vepsilon \sim \{\pm 1\}^{N_0}} \left[ \sup_{\mW\in\gC} \left( \sum_{j=i}^{N_0} \sum_{k=1}^d\epsilon_j ((\gN_{\vw,u}(\vx_{0j},0))_k - (\vf_0 (\vx_{0j}))_k) \right)\right] \\
        &= \frac{\lambda_1 L_\ell}{N_0} \E_{\vepsilon \sim \{\pm 1\}^{N_0}} \left[ \sup_{\mW\in\gC} \left( \sum_{j=i}^{N_0} \sum_{k=1}^d\epsilon_j (\gN_{\vw,u}(\vx_{0j},0))_k \right)\right].
    \end{align*}
    Let's define $\vz_{0j} \coloneqq (\vx_{0j},0)$, then we can simplify the above as
    \begin{align*}
        &= \frac{\lambda_1 L_\ell}{N_0} \E_{\vepsilon \sim \{\pm 1\}^{N_0}} \left[ \sup_{\mW\in\gC} \left( \sum_{j=i}^{N_0} \epsilon_j \ip{\va_1}{\sigma(\mW\vz_{0j})} \right)\right] \\
        &= \frac{\lambda_1 L_\ell}{N_0} \E_{\vepsilon \sim \{\pm 1\}^{N_0}} \left[ \sup_{\vw_q \in\textrm{RS}(\gC)} \left( \sum_{q=1}^p (\va_1)_q \sum_{j=i}^{N_0} \epsilon_j \sigma(\ip{\vw_q}{\vz_{0j}})\right)\right]\\
        &\leq \frac{\lambda_1 L_\ell}{N_0} \E_{\vepsilon \sim \{\pm 1\}^{N_0}} \left[ \sup_{\vw_r \in\textrm{RS}(\gC)} \left( \sum_{q=1}^p \abs{(\va_1)_q} \max_{r \in [p]} \abs{\sum_{j=i}^{N_0} \epsilon_j \sigma(\ip{\vw_r}{\vz_{0j}})} \right)\right]
    \end{align*}
    Using our assumption that $\sum_{q=1}^p \abs{(\va_1)_q} \leq B_\va$, we can say that,
    \begin{align*}
        &\leq \frac{\lambda_1 L_\ell B_\va}{N_0} \E_{\vepsilon \sim \{\pm 1\}^{N_0}} \left[ \sup_{\vw_r \in\textrm{RS}(\gC)} \left( \max_{r \in [p]} \abs{\sum_{j=i}^{N_0} \epsilon_j \sigma(\ip{\vw_{r}}{\vz_{0j}})}\right)\right] \\
        &\leq \frac{\lambda_1 L_\ell B_\va}{N_0} \E_{\vepsilon \sim \{\pm 1\}^{N_0}} \left[ \sup_{\vw_r \in\textrm{RS}(\gC)} \left( \abs{\sum_{j=i}^{N_0} \epsilon_j \sigma(\ip{\vw_r}{\vz_{0j}})} \right)\right] 
    \end{align*}

    Utilizing the fact that $\sigma(0) = c_0$ if we set $\chi(\cdot) = \phi(\cdot) - c_0$, we note that for any $\vepsilon \in \{\pm 1\}^{N_r}$, $\sup_{\vw} \ip{\vepsilon}{\chi(\mZ^\top\vw)} \geq 0$, where $\mZ = [\vz_1,\dots,\vz_{N_r}]$. Hence, we can apply Lemma~\ref{lem:dabsvalremtanh} to get,
    \begin{align*}
        &\leq \frac{ \lambda_1 L_\ell B_\va}{N_0} \Bigg\{2\E_{\vepsilon \sim \{\pm 1\}^{N_0}} \left[ \sup_{\vw_r\in\textrm{RS}(\gC)} \left( \sum_{j=i}^{N_0} \epsilon_j \sigma(\ip{\vw_r}{\vz_{0j}}) \right)\right] + \abs{c_0} \sqrt{N_0} \Bigg\}
    \end{align*}
    Now, since $\sigma(\cdot)$ is $L_\sigma$-Lipschitz, we can further apply Talagrand's contraction to get,
    \begin{align*}
        &\leq \frac{2 \lambda_1 L_\ell L_\sigma B_\va}{N_0} \E_{\vepsilon \sim \{\pm 1\}^{N_0}} \left[ \sup_{\vw_r\in\textrm{RS}(\gC)} \left( \sum_{j=i}^{N_0} \epsilon_j \ip{\vw_r}{\vz_{0j}} \right)\right] + \frac{\lambda_1 L_\ell B_\va \abs{c_0}}{\sqrt{N_0}}\\
    \end{align*}
    We recall that $\norm{\vw_r}_2 \leq B_{\vw}, \forall r \in [p]$, where $\vw_r = \mW_{r,:}$, and $\E_{\vz_{0j}}[\norm{\vz_{0j}}_2^2] \leq C_{\vz_0}^2, \forall j \in [N_0]$, from Definition~\ref{def:collocation}. Then using the standard Rademacher complexity bound for linear models (Theorem 5.5 of \cite{tengyucs229m}),
    \begin{align*}
        \hat{\gR}_{0} \leq \frac{2 \lambda_1 L_\ell B_{\vw} L_\sigma B_\va}{N_0}\sqrt{\sum_{j=1}^{N_0} \norm{\vz_{0j}}^2} + \frac{\lambda_1 L_\ell B_\va \abs{c_0}}{\sqrt{N_0}}
    \end{align*}
    and taking the expectation over the data gives us,
    \begin{align}
        \gR_{0} \leq \frac{\lambda_1 L_\ell B_\va (2B_{\vw} C_{\vz_0} L_\sigma + \abs{c_0})}{\sqrt{N_0}} . \label{eq:radboundini:1}
    \end{align}
        
    Therefore, combining the above with the bound on $\cR_{\rm res}$ derived earlier we can bound the empirical Rademacher complexity as,
    \begin{align*}
        &\cR_{\rm res} +\cR_0 \leq \frac{L_\ell (B_{\vw} C_\vz C_1 + C_2)}{\sqrt{N_r}} + \frac{\lambda_1 L_\ell B_\va (2B_{\vw} C_{\vz_0} L_\sigma + \abs{c_0})}{\sqrt{N_0}}
    \end{align*}
        where $C_1 \coloneqq 2B_{f_1}L_{\sigma'}+4B_{f_2}(B_\sigma L_{\sigma'}+B_{\sigma'}L_\sigma)+2B_{f_3}L_{\sigma'}+2\nu B_{f_4} L_{\sigma''}+2 \lambda_0 B_{f_5}L_{\sigma'}$ and $C_2 \coloneqq  B_{f_1}\abs{c_1} + B_{f_2}(2B_{\sigma'}\abs{c_0} + \abs{c_0 c_1}) + B_{f_3}\abs{c_1} + \nu B_{f_4}\abs{c_2} + \lambda_0 B_{f_5}\abs{c_1}$.

    
    Further, by setting the loss function $\ell$ to be the Huber loss $\ell_{H,\delta}$ for some $\delta > 0$ as in Definition~\ref{def:huber}, we can simplify the above bound to
    \begin{align}
        &\cR_{\rm res} +\cR_0 \leq \frac{\delta (B_{\vw} C_\vz C_1 + C_2)}{\sqrt{N_r}} + \frac{\lambda_1 \delta B_\va (2 B_{\vw} C_{\vz_0} L_\sigma + \abs{c_0})}{\sqrt{N_0}}
    \end{align}
    where $C_1 \coloneqq 2B_{f_1}L_{\sigma'}+4B_{f_2}(B_\sigma L_{\sigma'}+B_{\sigma'}L_\sigma)+2B_{f_3}L_{\sigma'}+2\nu B_{f_4} L_{\sigma''}+2 \lambda_0 B_{f_5}L_{\sigma'}$ and $C_2 \coloneqq  B_{f_1}\abs{c_1} + B_{f_2}(2B_{\sigma'}\abs{c_0} + \abs{c_0 c_1}) + B_{f_3}\abs{c_1} + \nu B_{f_4}\abs{c_2} + \lambda_0 B_{f_5}\abs{c_1}$.

    
\end{proof}





\section{Lemmas Towards the Generalization Bound for the Navier-Stokes PDE} \label{sec:NSlemproof}

\subsection{Proof of Lemma \ref{lem:genbound}} \label{prf:lem:genbound}
\begin{proof}
    \begin{align*}
        &\sup_{h\in\gH} \left( \frac{1}{N_r} \sum_{i=i}^{N_r} \ell_{res}(h,(\vx_{ri},t_{ri})) + \frac{1}{N_0} \sum_{j=i}^{N_0} \ell_{0}(h,\vx_{0j}) - \left( \E_{(\vx_r,\vt_r) \sim \gD_1}[\ell_{res}(h,(\vx_{r},\vt_{r}))] + \E_{\vx_0 \sim \gD_2}[\ell_{0}(h,\vx_{0})] \right) \right)\\
        &= \sup_{h\in\gH} \left( \frac{1}{N_r} \sum_{i=i}^{N_r} \ell_{res}(h,(\vx_{ri},t_{ri})) + 
        \frac{1}{N_0} \sum_{j=i}^{N_0} \ell_{0}(h,\vx_{0j}) - \left( \E_{(\vx_{ri}',t_{ti}') \sim \gD_1 \forall i \in [N_r]}\left[\frac{1}{N_r} \sum_{i=i}^{N_r} \ell_{res}(h,(\vx_{ri}',t_{ri}'))\right] \right.\right. \\
        &\hspace{26.0em} + \left.\left. \E_{\vx_{0j}' \sim \gD_2 \forall j\in[N_0]}\left[\frac{1}{N_0} \sum_{j=i}^{N_0} \ell_{0}(h,\vx_{0j}')\right] \right) \right)\\
        &= \sup_{h\in\gH} \left( \E_{(\vx_{ri}',t_{ti}') \sim \gD_1 \forall i \in [N_r]} \left[\frac{1}{N_r} \sum_{i=i}^{N_r} \ell_{res}(h,(\vx_{ri},t_{ri})) - \frac{1}{N_r} \sum_{i=i}^{N_r} \ell_{res}(h,(\vx_{ri}',t_{ri}')) \right] \right. \\
        &\hspace{17.0em} \left. + \E_{\vx_{0j}' \sim \gD_2 \forall j\in[N_0]} \left[ \frac{1}{N_0} \sum_{j=i}^{N_0} \ell_{0}(h,\vx_{0j}) - \frac{1}{N_0} \sum_{j=i}^{N_0} \ell_{0}(h,\vx_{0j}')\right] \right)\\
        &\leq \sup_{h\in\gH} \left( \E_{(\vx_{ri}',t_{ti}') \sim \gD_1 \forall i \in [N_r]} \left[ \frac{1}{N_r} \sum_{i=i}^{N_r} \ell_{res}(h,(\vx_{ri},t_{ri})) - \frac{1}{N_r} \sum_{i=i}^{N_r} \ell_{res}(h,(\vx_{ri}',t_{ri}')) \right] \right) \\
        &\hspace{17.0em} + \sup_{h\in\gH} \left( \E_{\vx_{0j}' \sim \gD_2 \forall j\in[N_0]} \left[\frac{1}{N_0} \sum_{j=i}^{N_0} \ell_{0}(h,\vx_{0j}) - \frac{1}{N_0} \sum_{j=i}^{N_0} \ell_{0}(h,\vx_{0j}')\right] \right)\\
        &\leq \E_{(\vx_{ri}',t_{ti}') \sim \gD_1 \forall i \in [N_r]} \left[ \sup_{h\in\gH} \left(\frac{1}{N_r} \sum_{i=i}^{N_r} \ell_{res}(h,(\vx_{ri},t_{ri})) - \frac{1}{N_r} \sum_{i=i}^{N_r} \ell_{res}(h,(\vx_{ri}',t_{ri}'))\right) \right] \\
        &\hspace{17.0em} + \E_{\vx_{0j}' \sim \gD_2 \forall j\in[N_0]} \left[ \sup_{h\in\gH} \left( \frac{1}{N_0} \sum_{j=i}^{N_0} \ell_{0}(h,\vx_{0j}) - \frac{1}{N_0} \sum_{j=i}^{N_0} \ell_{0}(h,\vx_{0j}') \right)\right]
    \end{align*}
    Now taking the expectation over $(\vx_{ri}, t_{ri})$ and $x_{0i}$ on both sides we get
    \begin{align*}
        &\E_{\substack{(\vx_{ri},t_{ri}) \sim \gD_1 \forall i \in [N_r] \\ \vx_{0j} \sim \gD_2 \forall j\in[N_0]}} \left[\sup_{h\in\gH} \left( \frac{1}{N_r} \sum_{i=i}^{N_r} \ell_{res}(h,(\vx_{ri},t_{ri})) + \frac{1}{N_0} \sum_{j=i}^{N_0} \ell_{0}(h,\vx_{0j}) - \left( \E_{(x_r,t_r) \sim \gD_1}[\ell_{res}(h,(x_{r},t_{r}))] + \E_{\vx_0 \sim \gD_2}[\ell_{0}(h,x_{0})] \right) \right)\right]\\
        &\leq \E_{(\vx_{ri},t_{ri}) \sim D_1 \forall i \in [N_r]}\left[ \E_{(\vx_{ri}',t_{ti}') \sim \gD_1 \forall i \in [N_r]} \left[ \sup_{h\in\gH} \left(\frac{1}{N_r} \sum_{i=i}^{N_r} (\ell_{res}(h,(\vx_{ri},t_{ri})) - \ell_{res}(h,(\vx_{ri}',t_{ri}')))\right) \right] \right]\\
        &\hspace{15.0em} + \E_{\vx_{0j} \sim \gD_2 \forall j\in[N_0]} \left[\E_{\vx_{0j}' \sim \gD_2 \forall j\in[N_0]} \left[ \sup_{h\in\gH} \left( \frac{1}{N_0} \sum_{j=i}^{N_0} (\ell_{0}(h,\vx_{0j}) - \ell_{0}(h,\vx_{0j}')) \right)\right]\right]\\
        &= \E_{\substack{(\vx_{ri},t_{ri}) \sim D_1 \forall i \in [N_r] \\ (\vx_{ri}',t_{ti}') \sim \gD_1 \forall i \in [N_r]}}\left[ \E_{\epsilon_i\sim \{\pm 1\}} \left[ \sup_{h\in\gH} \left(\frac{1}{N_r} \sum_{i=i}^{N_r} \epsilon_i(\ell_{res}(h,(\vx_{ri},t_{ri})) - \ell_{res}(h,(\vx_{ri}',t_{ri}')))\right) \right] \right]\\
        &\hspace{15.0em} + \E_{\substack{\vx_{0j} \sim \gD_2 \forall j\in[N_0] \\ \vx_{0j}' \sim \gD_2 \forall j\in[N_0]}} \left[\E_{\epsilon_j \sim \{\pm 1\}} \left[ \sup_{h\in\gH} \left( \frac{1}{N_0} \sum_{j=i}^{N_0} \epsilon_j (\ell_{0}(h,\vx_{0j}) - \ell_{0}(h,\vx_{0j}')) \right)\right]\right]\\
        &\leq \E_{\substack{(\vx_{ri},t_{ri}) \sim D_1 \forall i \in [N_r] \\ (\vx_{ri}',t_{ti}') \sim \gD_1 \forall i \in [N_r]}}\left[ \E_{\epsilon_i\sim \{\pm 1\}} \left[ \sup_{h\in\gH} \left(\frac{1}{N_r} \sum_{i=i}^{N_r} \epsilon_i\ell_{res}(h,(\vx_{ri},t_{ri})) \right) + \sup_{h\in\gH} \left( \frac{1}{N_r} \sum_{i=i}^{N_r} - \epsilon_i\ell_{res}(h,(\vx_{ri}',t_{ri}')) \right) \right] \right]\\
        &\hspace{15.0em} + \E_{\substack{\vx_{0j} \sim \gD_2 \forall j\in[N_0] \\ \vx_{0j}' \sim \gD_2 \forall j\in[N_0]}} \left[\E_{\epsilon_j \sim \{\pm 1\}} \left[ \sup_{h\in\gH} \left( \frac{1}{N_0} \sum_{j=i}^{N_0} \epsilon_j \ell_{0}(h,\vx_{0j}) \right) + \sup_{h\in\gH} \left( \frac{1}{N_0} \sum_{j=i}^{N_0} - \epsilon_j \ell_{0}(h,\vx_{0j}') \right)\right]\right]\\
        &= 2\gR_{res} (\gH) + 2\gR_{0} (\gH)
    \end{align*}
\end{proof}

\subsection{Proof of Lemma \ref{lem:partialderivatives}}\label{prf:lem:partialderivatives}

\begin{proof}
The net's full output is given by:
\[(\gN_{\vw,u}(\vz))_i=\sum_{j=1}^pa_{1ij}\sigma\left(\sum_{k=1}^dw_{jk}x_k+w_{j,d+1}t\right),\]
where $\vz = (\vx, t)$. The first order partial derivatives of $\gN_{\vw}$ are 
\begin{align*}
    \partial_{x_k} \gN_{\vw,p}(\vz) &= \sum_{j=1}^p a_{2j} \sigma'\left(\sum_{q=1}^dw_{j,q}x_q+w_{j,d+1}t\right)w_{j,k}=\ip{\va_2\odot\vw_{x_k}}{\sigma'(\mW\vz)}\\
    (\partial_t\gN_{\vw,\vu}(\vz))_k&=\sum_{j=1}^pa_{1k,j}\sigma'\left(\sum_{q=1}^dw_{j,q}x_q+w_{j,d+1}t\right)w_{j,d+1}=\ip{\va_{1k}\odot\vw_t}{\sigma'(\mW\vz)}\\
    (\partial_{x_m}\gN_{\vw,\vu}(\vz))_k&=\sum_{j=1}^pa_{1k,j}\sigma'\left(\sum_{q=1}^dw_{j,q}x_q+w_{j,d+1}t\right)w_{j,m}=\ip{\va_{1k}\odot\vw_{x_m}}{\sigma'(\mW\vz)}\\
\end{align*}
Using these we can show the non-linear term can be expressed as
\begin{align*}
    \nabla\cdot\gN_{\vw,u}(\vz)&=\sum_{m=1}^d\sum_{j=1}^pa_{1m,j}\sigma'\left(\sum_{q=1}^dw_{j,q}x_q+w_{j,d+1}t\right)w_{j,m}=\sum_{m=1}^d\ip{\va_{1m}\odot\vw_{x_m}}{\sigma'(\mW\vz)}\\
    \implies((\gN_{\vw,\vu}\cdot\nabla)\gN_{\vw,\vu})_k&=\sum_{m=1}^d\left(\sum_{j=1}^pa_{1m,j}\sigma\left(\sum_{q=1}^dw_{j,q}x_q+w_{j,d+1}t\right)\right) \left(\sum_{j=1}^pa_{1k,j}\sigma'\left(\sum_{q=1}^dw_{j,q}x_q+w_{j,d+1}t\right)w_{j,m}\right)\\
    &=\sum_{m=1}^d(\ip{\va_{1m}}{\sigma(\mW\vz)})(\ip{\va_{1k}\odot\vw_{\vx_m}}{\sigma'(\mW\vz)})\\
\end{align*}
and that the Laplacian is
\begin{align*}
    (\partial_{x_m}^2\gN_{\vw,\vu}(\vz))_k&=\sum_{j=1}^pa_{1k,j}\sigma''\left(\sum_{q=1}^dw_{j,q}x_q+w_{j,d+1}t\right)w_{j,m}^2=\ip{\va_{1k}\odot\vw_{x_m}\odot\vw_{x_m}}{\sigma''(\mW\vz)}\\
    \implies(\nabla^2\gN_{\vw,\vu}(\vz))_k&=\sum_{m=1}^d\sum_{j=1}^pa_{1k,j}\sigma''\left(\sum_{q=1}^dw_{j,q}x_q+w_{j,d+1}t\right)w_{j,m}^2=\sum_{m=1}^d\ip{\va_{1k}\odot\vw_{x_m}\odot\vw_{x_m}}{\sigma''(\mW\vz)}
\end{align*}

\end{proof}




\subsection{Proof of Lemma \ref{lem:dabsvalremtanh}}\label{prf:lem:dabsvalremtanh}

\begin{proof}
    \begin{align*}
        \sup_{\vw}\abs{\sum_{i=1}^n \epsilon_i f_\vw(\vz_i)} &= \sup_{\vw}\abs{\sum_{i=1}^n \epsilon_i (g_\vw(\vz_i) + c)}\leq\sup_\vw \left( \abs{\sum_{i=1}^n \epsilon_i g_\vw(\vz_i)} + \abs{\sum_{i=1}^n c\epsilon_i}\right) = \sup_\vw \left( \abs{\sum_{i=1}^n \epsilon_i g_\vw(\vz_i)}\right) + \abs{\sum_{i=1}^n c\epsilon_i}
    \end{align*}
    Now, taking the expectation over $\vepsilon$ on both sides we get,
    \begin{align}
        \E_{\vepsilon \sim \{\pm 1\}^n} \left[ \sup_{\vw}\abs{\sum_{i=1}^n \epsilon_i f_\vw(\vz_i)} \right] &\leq \E_{\vepsilon \sim \{\pm 1\}^n} \left[ \sup_\vw \left( \abs{\sum_{i=1}^n \epsilon_i g_\vw(\vz_i)}\right) \right] + \E_{\vepsilon \sim \{\pm 1\}^n} \left[ \abs{\sum_{i=1}^n c\epsilon_i} \right]\nonumber\\
        &\leq \E_{\vepsilon \sim \{\pm 1\}^n} \left[ \sup_\vw \left( \abs{\sum_{i=1}^n \epsilon_i g_\vw(\vz_i)}\right) \right] + \abs{c}\sqrt{n} \label{eq:minitala:1}
    \end{align}
    Let $\phi$ be the ReLU function, then the lemma's assumption implies that $\sup_{\vw} \phi\left(\ip{\vepsilon}{g_{\vw}(\mZ)}\right)=\sup_{\vw}\ip{\vepsilon}{g_{\vw}(\mZ)}$ for any $\vepsilon \in\{\pm 1\}^{n}$. Observing that $|x|=\phi(x)+\phi(-x)$,
    \begin{align*}
        \sup_{\vw}\abs{\ip{\vepsilon}{g_{\vw}(\mZ)}} &=\sup_{\vw}\left[\phi\left(\ip{\vepsilon}{g_{\vw}(\mZ)}\right)+\phi\left(\ip{-\vepsilon}{g_{\vw}(\mZ)}\right)\right] \\
        & \leq \sup _{\vw} \phi\left(\ip{\vepsilon}{g_{\vw}(\mZ)}\right)+\sup _{\vw} \phi\left(\ip{-\vepsilon}{g_{\vw}(\mZ)}\right) \\
        &=\sup_{\vw}\ip{\vepsilon}{g_{\vw}(\mZ)} + \sup_{\vw}\ip{-\vepsilon}{g_{\vw}(\mZ)}.
    \end{align*}
    Taking the expectation over $\vepsilon$ (and noting that $\vepsilon$ and $-\vepsilon$ have the same distribution) we get,
    \begin{align*}
        \E_{\vepsilon \sim \{\pm 1\}^n} \left[ \sup_\vw \abs{\sum_{i=1}^n \epsilon_i g_\vw(\vz_i)} \right] \leq 2\E_{\vepsilon \sim \{\pm 1\}^n} \left[ \sup_\vw \left(\sum_{i=1}^n \epsilon_i g_\vw(\vz_i)\right) \right]
    \end{align*}
    Replacing the above into equation~\ref{eq:minitala:1} we get,
    \begin{align}
        \E_{\vepsilon \sim \{\pm 1\}^n} \left[ \sup_{\vw}\abs{\sum_{i=1}^n \epsilon_i f_\vw(\vz_i)} \right] &\leq 2\E_{\vepsilon \sim \{\pm 1\}^n} \left[ \sup_\vw \left( \sum_{i=1}^n \epsilon_i g_\vw(\vz_i)\right) \right] + \abs{c}\sqrt{n}
    \end{align}
\end{proof}

\subsection{Proof of Lemma \ref{lem:dcontract1tanh}}\label{prf:lem:dcontract1tanh}

\begin{proof}
    \begin{align*}
        &\E_{\vepsilon \sim \{\pm 1\}^{N_r}} \left[ \sup_{\mW\in\gC} \left( \frac{1}{N_r} \sum_{i=1}^{N_r} \epsilon_i \ip{f(\mW)}{\phi(\mW \vz_i)} \right) \right] \nonumber\\
        &= \frac{1}{N_r} \E_{\vepsilon \sim \{\pm 1\}^{N_r}} \left[ \sup_{\mW\in\gC} \left( \sum_{i=1}^{N_r} \epsilon_i \sum_{m=1}^{p} f(\mW)_m \phi(\ip{\vw_m}{\vz_i}) \right) \right] \\
        &= \frac{1}{N_r} \E_{\vepsilon \sim \{\pm 1\}^{N_r}} \left[ \sup_{\mW\in\gC} \left( \sum_{m=1}^{p} f(\mW)_m \sum_{i=1}^{N_r} \epsilon_i \phi(\ip{\vw_m}{\vz_i}) \right) \right] \nonumber\\
        &\leq \frac{1}{N_r} \E_{\vepsilon \sim \{\pm 1\}^{N_r}} \left[ \sup_{\mW\in\gC} \left( \sum_{m=1}^{p} \abs{f(\mW)_m} \max_{k\in[p]} \abs{\sum_{i=1}^{N_r} \epsilon_i \phi(\ip{\vw_k}{\vz_i})} \right) \right] &&\left( \text{since } \sum_j \alpha_j \beta_j \leq \sum_j \abs{\alpha_j} \max_k \abs{\beta_k} \right)\\
        &\leq \frac{B}{N_r} \E_{\vepsilon \sim \{\pm 1\}^{N_r}} \left[ \sup_{\mW\in\gC} \max_{k\in[p]} \abs{\sum_{i=1}^{N_r} \epsilon_i \phi(\ip{\vw_k}{\vz_i})} \right] &&\left( \text{since } \sum_{m=1}^{p} \abs{f(\mW)_m} \leq B \right)\\
        &= \frac{B}{N_r} \E_{\vepsilon \sim \{\pm 1\}^{N_r}} \left[ \sup_{\bar{\vw}\in\textrm{RS}(\gC)} \abs{\sum_{i=1}^{N_r} \epsilon_i \phi(\ip{\bar{\vw}}{\vz_i})} \right]
    \end{align*}
    We then apply Lemma \ref{lem:dabsvalremtanh} with $f_{\bar{\vw}}(\vz_i) = \phi(\langle \bar{\vw}, \vz_i\rangle)$ and constant $c = \phi_1(0)$, so that $g_{\bar{\vw}}(\vz_i) = \phi(\langle \bar{\vw}, \vz_i\rangle)-c$ vanishes at $\bar{\vw} = 0$. The condition $\sup_{\bar{\vw}} \langle \epsilon, g_{\bar{\vw}}(\mZ)\rangle \geq 0$ holds since the supremum is taken over the zero vector as a feasible point. Lemma \ref{lem:dabsvalremtanh} then gives:
    
    \begin{align*}
        &\leq \frac{B}{N_r} \left\{2\E_{\vepsilon \sim \{\pm 1\}^{N_r}} \left[ \sup_{\bar{\vw}\in\textrm{RS}(\gC)} \sum_{i=1}^{N_r} \epsilon_i g(\ip{\bar{\vw}}{\vz_i}) \right] + \abs{c}\sqrt{N_r}\right\}
    \end{align*}    
    Now, $\chi(\cdot)$ has the same Lipschitz constant as $\phi(\cdot)$,  we can apply Talagrand's contraction lemma to get,
    \begin{align}
        &\leq \frac{2B L_{\phi}}{N_r} \E_{\vepsilon \sim \{\pm 1\}^{N_r}} \left[ \sup_{\bar{\vw}\in\textrm{RS}(\gC)} \sum_{i=1}^{N_r} \epsilon_i \ip{\bar{\vw}}{\vz_i} \right] + \frac{B\abs{c}}{\sqrt{N_r}}
    \end{align}
\end{proof}

\subsection{Proof of Lemma \ref{lem:dcontract2}}\label{prf:lem:dcontract2}

\begin{proof}
    \begin{align*}
        &\E_{\substack{\vepsilon \sim \{\pm 1\}^{N_r}}} \left[ \sup_{\mW\in\gC} \left(\frac{1}{N_r} \sum_{i=1}^{N_r}\sum_{m=1}^d \epsilon_i \ip{f_m(\mW)}{\phi_1 (\mW \vz_i)} \ip{\va_m}{\phi_2(\mW \vz_i)} \right)\right] \nonumber\\
        &=\frac{1}{N_r} \E_{\substack{\vepsilon \sim \{\pm 1\}^{N_r}}} \left[ \sup_{\mW\in\gC} \left( \sum_{i=1}^{N_r}\sum_{m=1}^d \epsilon_i \sum_{q_1,q_2 = 1}^p f_m(\mW)_{q_1} \phi_1(\ip{\vw_{q_1}}{\vz_i}) \va_{m,q_2}\phi_2(\ip{\vw_{q_2}}{\vz_i}) \right)\right] \nonumber\\
        &=\frac{1}{N_r}\E_{\substack{\vepsilon \sim \{\pm 1\}^{N_r}}} \left[ \sup_{\mW\in\gC} \left( \sum_{m=1}^d\sum_{q_1,q_2 = 1}^p f_m(\mW)_{q_1} \va_{m,q_2} \sum_{i=1}^{N_r} \epsilon_i  \phi_1(\ip{\vw_{q_1}}{\vz_i})\phi_2(\ip{\vw_{q_2}}{\vz_i}) \right)\right]
        \end{align*}
     Using $\sum_j \alpha_j \beta_j \leq \sum_j \abs{\alpha_j} \max_k \abs{\beta_k}$,
        \begin{align*}
        &\leq\frac{1}{N_r}\E_{\substack{\vepsilon \sim \{\pm 1\}^{N_r}}} \left[ \sup_{\mW\in\gC} \left( \sum_{m=1}^d\sum_{q_1,q_2 = 1}^p \abs{f_m(\mW)_{q_1}} \abs{\va_{m,q_2}} \max_{q_1,q_2 \in [p]} \abs{\sum_{i=1}^{N_r} \epsilon_i  \phi_1(\ip{\vw_{q_1}}{\vz_i})\phi_2(\ip{\vw_{q_2}}{\vz_i})} \right)\right]
    \end{align*}
    Since $\sum_{m=1}^d\sum_{q_1,q_2 = 1}^p \abs{f_m(\mW)_{q_1}} \abs{\va_{m,q_2}} \leq B$,
    
    
    \begin{align*}
        &\leq\frac{B}{N_r}\E_{\substack{\vepsilon \sim \{\pm 1\}^{N_r}}} \left[ \sup_{\mW\in\gC} \max_{q_1,q_2 \in [p]} \abs{\sum_{i=1}^{N_r} \epsilon_i  \phi_1(\ip{\vw_{q_1}}{\vz_i})\phi_2(\ip{\vw_{q_2}}{\vz_i})} \right]\\
        &= \frac{B}{N_r}\E_{\substack{\vepsilon \sim \{\pm 1\}^{N_r}}} \left[ \sup_{\bar{\vw_1},\bar{\vw_2}\in\textrm{RS}(\gC)} \abs{\sum_{i=1}^{N_r} \epsilon_i  \phi_1(\ip{\bar{\vw}_1}{\vz_i})\phi_2(\ip{\bar{\vw}_2}{\vz_i})} \right]
    \end{align*}
    We then apply Lemma \ref{lem:dabsvalremtanh} with $f_\vw(\vz_i) = \phi_1(\langle \vw_1, \vz_i\rangle)\phi_2(\langle \vw_2, \vz_i\rangle)$ and constant $c = \phi_1(0)\cdot\phi_2(0) = k$, so that $g_\vw(z_i) = \phi_1(\langle \vw_1, \vz_i\rangle)\phi_2(\langle \vw_2, \vz_i\rangle) - k$ vanishes at $\vw = 0$. The condition $\sup_\vw \langle \epsilon, g_\vw(\mZ)\rangle \geq 0$ holds since the supremum is taken over the zero vector as a feasible point. Lemma \ref{lem:dabsvalremtanh} then gives:

    \begin{align}
        &\nonumber\leq\frac{B}{N_r} \left\{2\E_{\substack{\vepsilon \sim \{\pm 1\}^{N_r}}} \left[ \sup_{\bar{\vw}_1\bar{\vw}_2\in\textrm{RS}(\gC)} \sum_{i=1}^{N_r} \epsilon_i  (\phi_1(\ip{\bar{\vw}_{1}}{\vz_i})\phi_2(\ip{\bar{\vw}_2}{\vz_i}) - k)\right] + \abs{k}\sqrt{N_r}\right\}\\
        &\nonumber=\frac{B}{N_r} \left\{2\E_{\substack{\vepsilon \sim \{\pm 1\}^{N_r}}} \left[ \sup_{\bar{\vw}_1\bar{\vw}_2\in\textrm{RS}(\gC)} \sum_{i=1}^{N_r} \epsilon_i  \phi_1(\ip{\bar{\vw}_{1}}{\vz_i})\phi_2(\ip{\bar{\vw}_2}{\vz_i})\right] -2\E_{\substack{\vepsilon \sim \{\pm 1\}^{N_r}}} \left[\sum_{i=1}^{N_r} \epsilon_i k\right]+ \abs{k}\sqrt{N_r}\right\}\\
        &=\frac{B}{N_r} \left\{2\E_{\substack{\vepsilon \sim \{\pm 1\}^{N_r}}} \left[ \sup_{\bar{\vw}_1\bar{\vw}_2\in\textrm{RS}(\gC)} \sum_{i=1}^{N_r} \epsilon_i  \phi_1(\ip{\bar{\vw}_{1}}{\vz_i})\phi_2(\ip{\bar{\vw}_2}{\vz_i})\right] + \abs{k}\sqrt{N_r}\right\} \label{eq:after_lemma5}
    \end{align}
where the last step comes from the fact that $\E_{\substack{\vepsilon \sim \{\pm 1\}^{N_r}}} \left[\sum_{i=1}^{N_r} \epsilon_i \right]=\sum_{i=1}^{N_r}\E_{\substack{\vepsilon \sim \{\pm 1\}^{N_r}}} \left[ \epsilon_i \right]=0$.

It remains to bound the Rademacher complexity of the function class
$\mathcal{G} = \{\psi_{\vw_1,\vw_2} : \vw_1, \vw_2 \in \mathrm{RS}(C)\}$,
where $\psi_{\vw_1,\vw_2}(z) := \phi_1(\langle \vw_1, z\rangle)\phi_2(\langle \vw_2, z\rangle)$.
We proceed in two sequential contraction steps, freezing one supremum 
variable at a time.

\medskip
\noindent\textbf{$\bullet$ (Splitting across a reference weight ${\vw}_2^*$)}
Fix any reference vector $\vw_2^* \in \mathrm{RS}(C)$. For any fixed $\bar{\vw}_1$, 
the scalars $\phi_1(\langle \bar{\vw}_1, \vz_i\rangle)$ are deterministic, 
and since $|\phi_1(\cdot)| \leq B_{\phi_1}$, we have:
\begin{align}
    &\mathbb{E}_{\epsilon}\left[\sup_{\bar{\vw}_1, \bar{\vw}_2 \in \mathrm{RS}(C)} \sum_{i=1}^{N_r} \epsilon_i\, \phi_1(\langle \bar{\vw}_1, \vz_i\rangle)\phi_2(\langle \bar{\vw}_2, \vz_i\rangle)\right] \notag\\
    &\leq \mathbb{E}_{\epsilon}\left[\sup_{\bar{\vw}_1 \in \mathrm{RS}(C)}\left\{ \sup_{\bar{\vw}_2 \in \mathrm{RS}(C)} \sum_{i=1}^{N_r} \epsilon_i\, \phi_1(\langle \bar{\vw}_1, \vz_i\rangle) \left[\phi_2(\langle \bar{\vw}_2, \vz_i\rangle) - \phi_2(\langle \vw_2^*, \vz_i\rangle)\right]  + \sum_{i=1}^{N_r} \epsilon_i\, \phi_1(\langle \bar{\vw}_1, \vz_i\rangle)\phi_2(\langle \vw_2^*, \vz_i\rangle)\right\}\right] \notag\\
    &\leq \mathbb{E}_{\epsilon}\left[\sup_{\bar{\vw}_1, \bar{\vw}_2 \in \mathrm{RS}(C)} \sum_{i=1}^{N_r} \epsilon_i\, \phi_1(\langle \bar{\vw}_1, \vz_i\rangle) \left[\phi_2(\langle \bar{\vw}_2, \vz_i\rangle) - \phi_2(\langle \vw_2^*, \vz_i\rangle)\right]\right] 
     + \mathbb{E}_{\epsilon}\left[\sup_{\bar{\vw}_1 \in \mathrm{RS}(C)} \sum_{i=1}^{N_r} \epsilon_i\, \phi_1(\langle \bar{\vw}_1, \vz_i\rangle)\phi_2(\langle \vw_2^*, \vz_i\rangle)\right]. \label{eq:split}
\end{align}
For the first term in \eqref{eq:split}, since $|\phi_1(\langle \bar{\vw}_1, \vz_i\rangle)| \leq B_{\phi_1}$ we get,
\begin{align*}
    &\mathbb{E}_{\epsilon}\left[\sup_{\bar{\vw}_1, \bar{\vw}_2 \in \mathrm{RS}(C)} \sum_{i=1}^{N_r} \epsilon_i\, \phi_1(\langle \bar{\vw}_1, \vz_i\rangle) \left[\phi_2(\langle \bar{\vw}_2, \vz_i\rangle) - \phi_2(\langle \vw_2^*, \vz_i\rangle)\right]\right]\\ 
    &\leq \mathbb{E}_{\epsilon}\left[\sup_{\bar{\vw}_2 \in \mathrm{RS}(C)} \left|\sum_{i=1}^{N_r} \epsilon_i \phi_1(\bar{\vw}_1, \vz_i) \left[\phi_2(\langle \bar{\vw}_2, \vz_i\rangle) - \phi_2(\langle \vw_2^*, \vz_i\rangle)\right]\right|\right]\\
    &\leq B_{\phi_1}\, \mathbb{E}_{\epsilon}\left[\sup_{\bar{\vw}_2 \in \mathrm{RS}(C)} \left|\sum_{i=1}^{N_r} \epsilon_i \left[\phi_2(\langle \bar{\vw}_2, \vz_i\rangle) - \phi_2(\langle \vw_2^*, \vz_i\rangle)\right]\right|\right]
\end{align*}
We apply Lemma~\ref{lem:dabsvalremtanh} with $f_\vw(\vz_i) = \phi_2(\langle \bar{\vw}_2, \vz_i\rangle) - \phi_2(\langle \vw_2^*, \vz_i\rangle)$ and constant $c = 0$,
so that $g_w(\vz_i) = \phi_2(\langle \bar{\vw}_2, \vz_i\rangle) - \phi_2(\langle \vw_2^*, \vz_i\rangle)$ vanishes at $\bar{\vw}_2 = \vw_2^*$.
The condition $\sup_\vw \langle \epsilon, g_w(\mZ)\rangle \geq 0$ holds since 
the supremum is taken over the zero vector as a feasible point. Therefore, for the first term in the RHS of \eqref{eq:split} we get,
\begin{align*}
    \mathbb{E}_{\epsilon}\left[\sup_{\bar{\vw}_1, \bar{\vw}_2 \in \mathrm{RS}(C)} \sum_{i=1}^{N_r} \epsilon_i\, \phi_1(\langle \bar{\vw}_1, \vz_i\rangle) \left[\phi_2(\langle \bar{\vw}_2, \vz_i\rangle) - \phi_2(\langle \vw_2^*, \vz_i\rangle)\right]\right] &\leq 2 B_{\phi_1}\, \mathbb{E}_{\epsilon}\left[\sup_{\bar{\vw}_2 \in \mathrm{RS}(C)} \sum_{i=1}^{N_r} \epsilon_i \left[\phi_2(\langle \bar{\vw}_2, \vz_i\rangle) - \phi_2(\langle \vw_2^*, \vz_i\rangle)\right]\right]\\
    &\leq 2B_{\phi_1} L_{\phi_2}\, \mathbb{E}_{\epsilon}\left[\sup_{\bar{\vw} \in \mathrm{RS}(C)} \sum_{i=1}^{N_r} \epsilon_i \langle \bar{\vw}, \vz_i\rangle\right],
\end{align*}
where the last step uses Talagrand's contraction on 
$\phi_2(\langle \bar{\vw}_2, \vz_i\rangle) - \phi_2(\langle \vw_2^*, \vz_i\rangle)$,
which is $L_{\phi_2}$-Lipschitz in $\langle \bar{\vw}_2, \vz_i\rangle$.

\medskip
\noindent \textbf{$\bullet$ (Contract over $\bar{\vw}_1$ with $\vw_2^*$ fixed)}
We now handle the second term in \eqref{eq:split}. 
Since $\vw_2^*$ is a fixed reference vector, $\phi_2(\langle \vw_2^*, \vz_i\rangle)$ 
is a fixed scalar for each $i$. Using $|\phi_2(\langle \vw_2^*, \vz_i\rangle)| \leq B_{\phi_2}$:
\begin{align*}
    &\mathbb{E}_{\epsilon}\left[\sup_{\bar{\vw}_1 \in \mathrm{RS}(C)} \sum_{i=1}^{N_r} \epsilon_i\, \phi_1(\langle \bar{\vw}_1, \vz_i\rangle)\phi_2(\langle \vw_2^*, \vz_i\rangle)\right]\leq \mathbb{E}_{\epsilon}\left[\sup_{\bar{\vw}_1 \in \mathrm{RS}(C)} \abs{\sum_{i=1}^{N_r} \epsilon_i\, \phi_1(\langle \bar{\vw}_1, \vz_i\rangle)\phi_2(\langle \vw_2^*, \vz_i\rangle)}\right]\\
    &\leq B_{\phi_2}\, \mathbb{E}_{\epsilon}\left[\sup_{\bar{\vw}_1 \in \mathrm{RS}(C)} \left|\sum_{i=1}^{N_r} \epsilon_i\, \phi_1(\langle \bar{\vw}_1, \vz_i\rangle)\right|\right].
\end{align*}
We apply Lemma~\ref{lem:dabsvalremtanh} 
with $f_\vw(\vz_i) = \phi_1(\langle \vw, \vz_i\rangle)$ and constant $c = \phi_1(0) = c_0$,
so that $g_\vw(\vz_i) = \phi_1(\langle \vw, \vz_i\rangle) - c_0$ vanishes at $\vw = 0$.
The condition $\sup_w \langle \epsilon, g_\vw(Z)\rangle \geq 0$ holds since 
the supremum is taken over the zero vector as a feasible point. Hence for the second term in the RHS of \eqref{eq:split} we get,
%
\begin{align*}
    \mathbb{E}_{\epsilon}\left[\sup_{\bar{\vw}_1 \in \mathrm{RS}(C)} \sum_{i=1}^{N_r} \epsilon_i\, \phi_1(\langle \bar{\vw}_1, \vz_i\rangle)\phi_2(\langle \vw_2^*, \vz_i\rangle)\right] \leq 2 B_{\phi_2}\,\mathbb{E}_{\epsilon}\left[\sup_{\bar{\vw}_1 \in \mathrm{RS}(C)} \sum_{i=1}^{N_r} \epsilon_i \left[\phi_1(\langle \bar{\vw}_1, \vz_i\rangle) - c_0\right]\right] + B_{\phi_2}|c_0|\sqrt{N_r}.
\end{align*}
Now $\phi_1(\langle \bar{\vw}_1, \vz_i\rangle) - c_0$ is an 
$L_{\phi_1}$-Lipschitz function of $\langle \bar{\vw}_1, \vz_i\rangle$, so Talagrand's contraction applies to give,
\begin{align*}
    \mathbb{E}_{\epsilon}\left[\sup_{\bar{\vw}_1 \in \mathrm{RS}(C)} \sum_{i=1}^{N_r} \epsilon_i\, \phi_1(\langle \bar{\vw}_1, \vz_i\rangle)\phi_2(\langle \vw_2^*, \vz_i\rangle)\right] \leq 2 B_{\phi_2} L_{\phi_1}\,\mathbb{E}_{\epsilon}\left[\sup_{\bar{\vw} \in \mathrm{RS}(C)} \sum_{i=1}^{N_r} \epsilon_i \langle \bar{\vw}, \vz_i\rangle\right] + B_{\phi_2}|c_0|\sqrt{N_r}.
\end{align*}

\medskip
\noindent\textbf{$\bullet$ (Combining)} Substituting into \eqref{eq:split} the end results of the above two analysis blocks we get,
\begin{align*}
    \mathbb{E}_{\epsilon}\left[\sup_{\bar{\vw}_1, \bar{\vw}_2 \in \mathrm{RS}(C)} \sum_{i=1}^{N_r} \epsilon_i\, \phi_1(\langle \bar{\vw}_1, \vz_i\rangle)\phi_2(\langle \bar{\vw}_2, \vz_i\rangle)\right]
    \leq 2(B_{\phi_1} L_{\phi_2} + B_{\phi_2} L_{\phi_1})\, \mathbb{E}_{\epsilon}\left[\sup_{\bar{\vw} \in \mathrm{RS}(C)} \sum_{i=1}^{N_r} \epsilon_i \langle \bar{\vw}, \vz_i\rangle\right] + B_{\phi_2}|c_0|\sqrt{N_r}.
\end{align*}
Substituting the above back into \eqref{eq:after_lemma5} we get the desired bound,
\begin{align}
    &\mathbb{E}_{\epsilon \sim \{\pm 1\}^{N_r}}\left[\sup_{W \in C} \left(\frac{1}{N_r}\sum_{i=1}^{N_r} \sum_{m=1}^{d} \epsilon_i \langle f_m(W), \phi_1(W\vz_i)\rangle \langle a_m, \phi_2(W\vz_i)\rangle \right)\right] \notag \\
    &\leq \frac{B}{N_r}\left\{4(B_{\phi_1} L_{\phi_2} + B_{\phi_2} L_{\phi_1})\, \mathbb{E}_{\epsilon}\left[\sup_{\bar{\vw} \in \mathrm{RS}(C)} \sum_{i=1}^{N_r} \epsilon_i \langle \bar{\vw}, \vz_i\rangle\right] + 2 B_{\phi_2}|c_0|\sqrt{N_r} + |k|\sqrt{N_r}\right\}\\
    &=\frac{4B(B_{\phi_1} L_{\phi_2} + B_{\phi_2} L_{\phi_1})}{N_r}\, \mathbb{E}_{\epsilon}\left[\sup_{\bar{\vw} \in \mathrm{RS}(C)} \sum_{i=1}^{N_r} \epsilon_i \langle \bar{\vw}, \vz_i\rangle\right] + \frac{B(2B_{\phi_2}|c_0| + |k|)}{\sqrt{N_r}}. \nonumber
\end{align}

\end{proof}

\section{Experimental Study on the Taylor-Green Vortex}\label{sec:num_experiments}

We choose to test our result on the Taylor-Green vortex solution \citep{taylor1937mechanism} to the Navier-Stokes equations (Equation \ref{eq:dNS}) with space dimension $d=2$.\footnote{Link to the code : \url{https://github.com/SebastienAndreSloan/NS_rademacher}} The true solution we test against is given as,
\begin{align*}\setlength\abovedisplayskip{6pt}\setlength\belowdisplayskip{6pt}
    u(x,y,t)&=-\cos(\pi x)\sin(\pi y)e^{-2\pi^2\nu t}\\
    v(x,y,t)&=\sin(\pi x)\cos(\pi y)e^{-2\pi^2\nu t}\\
    p(x,y,t)&=-\frac{\rho}{4}\left(\cos(2\pi x)+\cos(2\pi y)\right)e^{-4\pi^2\nu t}
\end{align*}
The specific instance of the above that we choose has viscosity ($\nu$) set to $10^{-2}$ and the density parameter ($\rho$) set to $1$.


A simple architecture has been chosen that satisfies Definition \ref{def:nn}, specifically $\gN_\vw(x, y, t)=(\mA_1\sigma(\mW\vz),\ip{\va_2}{\sigma(\mW_1\vz)})$, where $\vz=(x,y,t)\in\R^3,\mW\in\R^{p \times 3}$ and both $\sigma=\tanh$ and $\sigma=\tanh^3$ were tested. Each coordinate in $\mA_1\in\R^{3\times p}$  and $\va_2\in\R^p$ has been sampled from a Gaussian distribution with mean $0$ and variance $1$.

The space-time domain is set to the unit cube $(x,y,t)\in[0,1]^3$ and the loss function's parameters are chosen as $\lambda_0=1$ and $\lambda_1=0.3$ as defined in Definition \ref{def:lossclass}. The collocation points based constants, $C_\vz$ and $C_{\vz_0}$ are therefore $C_\vz=1,C_{\vz_0}=\frac{2}{3}$. The constants related to the activation function $\sigma$ ($B_\sigma, B_{\sigma'}, L_{\sigma}, L_{\sigma'}, L_{\sigma''}$) are set to $B_\sigma=1, B_{\sigma'}=1, L_{\sigma}=1, L_{\sigma'}=0.8, L_{\sigma''}=2, ~\text{for} ~\sigma=\tanh$ and $B_\sigma=1, B_{\sigma'}=0.75, L_{\sigma}=0.75, L_{\sigma'}=1.4, L_{\sigma''}=6 ~\text{for} ~\sigma=\tanh^3$.

The PINNs were trained for $20,000$ epochs (after which the training loss stabilized) using the AdamW optimizer as implemented in PyTorch with a learning rate of $1 \cdot 10^{-3}$. We varied the experimental setup by increasing the number of collocation points in the bulk ($N_r$) while keeping $N_0 = 2500$ fixed. This experiment was done for both $\tanh$ and $\tanh^3$ activations. The resulting generalization gap was measured for each trained net and  we also evaluated the Rademacher bound from Theorem~\ref{thm:NSRad} for each.

\begin{figure}[htbp!]
\centering
\begin{subfigure}
    \centering
    \includegraphics[width=0.4\linewidth]{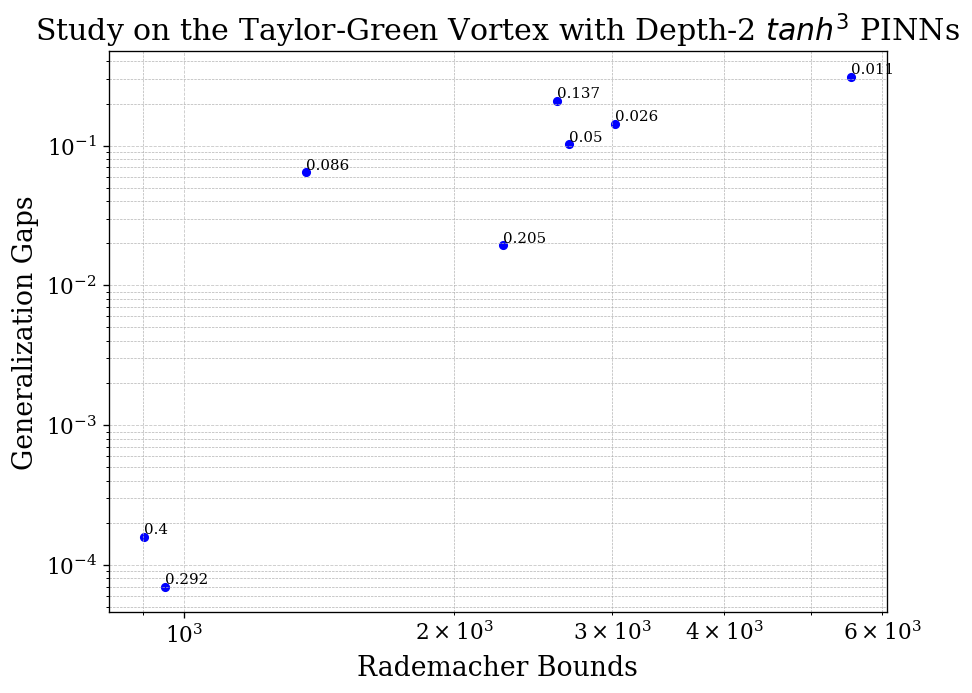}
\end{subfigure}
\begin{subfigure}
    \centering
    \includegraphics[width=0.4\linewidth]{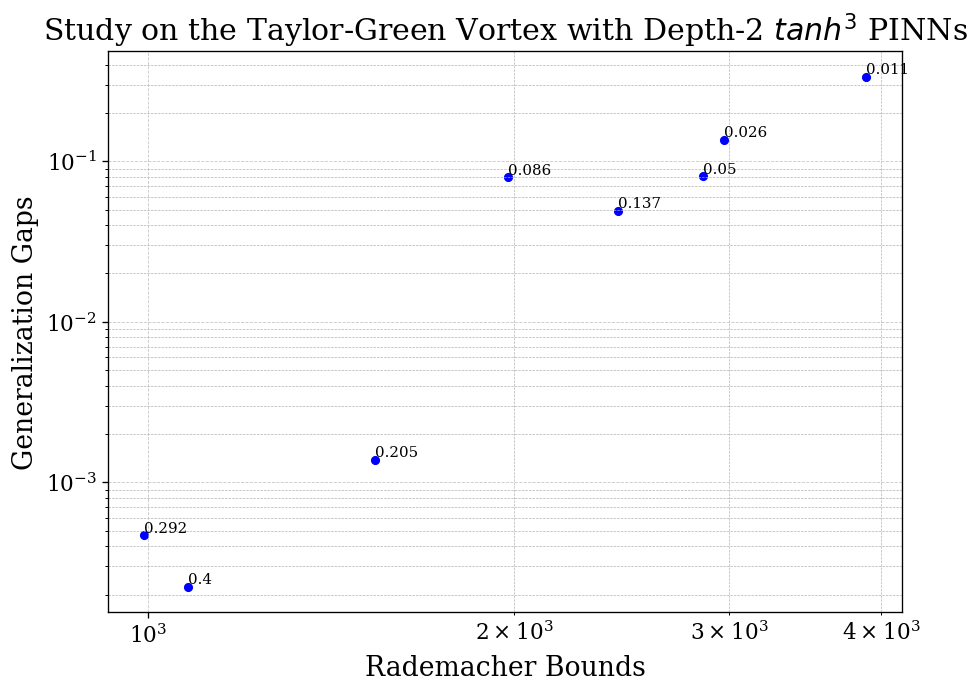}
\end{subfigure}
\caption{A scatter plot showing the relation between the generalization error upperbound given in Theorem \ref{thm:NSRad} and the generalization error calculated after training of the $tanh^3$ activated PINNs. Each point is labeled by the $N_r/N_0$ ratio at which the net was trained. On the left, the viscosity is set to $\nu=0.001$ and on the right $\nu=0.01$.}
\label{fig:TG_plot}
\end{figure}

In Figure \ref{fig:TG_plot} we give two plots of the measured generalization error of the $\tanh^3$ nets trained via the above experiment for $N_r=[3^3, 4^3, 5^3, 6^3, 7^3, 8^3, 9^3, 10^3]$. On the left, the viscosity was set to $\nu=0.001$ and on the right $\nu=0.01$. Across these variations of $N_r$  we compared the predicted bound from Theorem \ref{thm:NSRad} to the empirically derived gap. The measured correlation coefficient between these quantities were $0.905$ for $\nu = 0.001$ and $0.889$ for $\nu = 0.01$ -- {\em while we note that the training did not explicitly enforce the weight constraints via the $B_{f_s}$ bounds for $s \in [4]$, as specified in Definition~\ref{def:Wspace}.} For the same experiment using $\sigma=\tanh$ activation results in a correlation coefficient of $0.595$ and $0.687$ for the $\nu=0.001$ and $\nu=0.01$ respectively. Thus we evidence that our theoretical bound reproduces the observed scaling of performance with data. 
\section{Conclusion}\label{sec:conclusion}



We believe that the generalized error upper bound established here via Rademacher complexity extends to other non-linear PDEs whose non-linear terms are compatible with the contraction lemmas presented in this work. While this work establishes worst-case generalization bounds for the PINN risk of Navier-Stokes, deriving similar finite-sample guarantee for approximating the Navier-Stokes solution remains an open question. This gap stems from the fundamental fact that the generalization error of PINNs do not unconditionally ensure control over the error with respect to the true PDE solution. 

Furthermore, an important direction for future research is to extend such bounds as here for the Navier-Stokes equations to deeper neural networks, as has been demonstrated for linear PDEs in \cite{zheyuan22xpinn}.

Another promising avenue for future work is to establish lower bounds on the neural network size necessary to approximate the Navier–Stokes equations, under semi-supervised training, analogous to the results for the Hamilton-Jacobi-Bellman PDE in \cite{andresloan2025noisypde}.

State-of-the-art architectures such as AB-UPT \citep{alkin2025abupt} employ transformer-based models, for which no analogous theoretical analysis currently exist. An exciting direction for future work would be to extend to such data-assimilating transformer setups, such Rademacher analyses and the aforementioned lowerbound proofs for the semi-supervised setup.

Lastly, we note that a challenging direction for future work would be to establish analogous upper bounds that depend on the complexity of the underlying solution, as in Theorem 3.1 of \cite{zheyuan22xpinn} which was in the context of linear PDEs. Incorporating this aspect could further enable bounding the $\ell_2$ distance between the neural network approximation and the true solution, under appropriate assumptions on the differential operator, which is ultimately the main objective of such analyses.




\bibliography{sn-bibliography}

\appendix

\section{A Brief Introduction to the General PINN Risk}\label{sec:pinnbackground}

Consider the following specification of a PDE satisfied by an appropriately smooth function $\vu(\vx,t)$ 
\begin{align}\label{eq:pinnreview.pde}
    &\vu_t + \gN_\vx[\vu] = 0,&\vx \in D,\;\; t \in [0,T] \nonumber\\
    & \gC(\vu)(\vx,t) = 0 &\vx \in D,\;\; t \in [0,T] \nonumber\\
    &\vu(\vx,0) = \vf_0(\vx), &\vx \in D \\
    &\vu(\vx,t) = \vg(\vx,t), &\vx \in \partial D, \;\; t \in [0,T] \nonumber
\end{align}
where $\vx$ and $t$ represent the space and time dimensions, subscripts denote the partial differentiation variables, $\gN_\vx[\vu]$ is the (may be nonlinear) differential operator, $D$ is a subset of $\R^d$ s.t. it has a well-defined boundary $\partial D$. And $\gC$ is some functional in whose kernel the solution must like, like this could be the incompressibility condition in fluid dynamics.

Following \cite{raissi2019physics}, we try to approximate $\vu(\vx,t)$ by a deep neural network $\vu_\theta(\vx,t)$, and we define the corresponding residuals as,
\begin{gather}
\nonumber \gR_{\rm pde}(\vx,t) \coloneqq \partial_t \vu_\theta + \gN_\vx[\vu_\theta(\vx,t)], ~\gR_t(\vx) \coloneqq \vu_\theta(\vx,0) - \vf_0(\vx), \\ 
~\gR_b(\vx,t) \coloneqq \vu_\theta(\vx,t) - \vg(\vx,t), ~\gR_c(\vx,t) = \gC(\vu_\theta(\vx,t))
\end{gather}
Note that the partial derivative of the neural network ($\vu_\theta$) can be easily calculated using auto-differentiation facilities on modern deep-learning softwares.

Corresponding to a choice of three positive integers, $N_{\rm pde},N_0$ and $N_b$ denoting different counts of the number of points needed, typical forms of the losses constructed from the residuals mentioned above would be, 
\begin{align}
    \gL_{\rm pde} = \frac{1}{N_{\rm pde}} \sum_{i=1}^{N_{\rm pde}} \gR_{\rm pde}(\vx_r^i,t_r^i)^2, ~\gL_{0} = \frac{1}{N_{0}} \sum_{i=1}^{N_{0}} \gR_{t}(\vx_0^i)^2, ~\gL_{b} = \frac{1}{N_{b}} \sum_{i=1}^{N_{b}} \gR_{b}(\vx_b^i,t_b^i)^2, ~\gL_{\rm c} = \frac{1}{N_{\rm pde}} \sum_{i=1}^{N_{\rm pde}} \gR_{c}(\vx_r^i,t_r^i)^2
\end{align}
where $(\vx_r^i,t_r^i)$ denotes the points in the interior of $D \times [0,T]$, $(\vx_0^i)$ are the points sampled on the spatial domain $D$ and $(\vx_b^i,t_b^i)$ are the points sampled on the boundary of $D \times [0,T]$. 

Upon choosing regularizers $\lambda_0, \lambda_b, \lambda_c >0$, the neural net is then trained on an empirical loss function,
\begin{align}
    \hat{R}(\theta) \coloneqq \gL_{\rm pde}(\theta) + \lambda_0 \gL_0(\theta) + \lambda_b \gL_b(\theta) + \lambda_c \gL_c(\theta) 
\end{align}
where $\gL_{\rm pde},~\gL_0,~\gL_b$ and $\gL_c$ respectively penalize for $\gR_{\rm pde},~\gR_t,~\gR_b$ and $\gR_c$  being non-zero.

The common strategy here is to train a neural net $\vu_\theta$ such that $\gL(\theta)$ is as close to zero as possible. {\it It's important to note that except under specific conditions it remains largely unclear as to when this method of minimizing $\gL(\theta)$ would find a good approximation to the solution of the PDE in equation \ref{eq:pinnreview.pde}.}

Often the algorithms deployed to minimize the above empirical risk also seem to simultaneously also minimize the corresponding population risk function. If we denote by $\mu_r, \mu_0$ and $\mu_b$, the choices for the distribution on the spaces $D \times [0,T]$, $D$ and $[0,T]$ respectively then $\gL(\theta)$ above can be seen as an empirical estimate for the population risk, say $\gR(\theta)$ defined as,
\begin{gather}
    \gR(\theta) \\
    \notag \coloneqq \E_{(\vx_r,t_r) \sim \mu_r} [ \gR_{\rm pde}(\vx_r,t_r)^2 ] + \lambda_0  \E_{\vx_0 \sim \mu_0} [ \gR_{t}(\vx_0)^2 ]  + \lambda_b \E_{(\vx_b,t_b) \sim \mu_b} [ \gR_{b}(\vx_b,t_b)^2 ] + \lambda_c \E_{(\vx_r,t_r) \sim \mu_r} [ \gR_{c}(\vx_r,t_r)^2 ]
\end{gather}

{\it We note that it's a fundamental open question to be able to quantify the difference between $\hat{R}(\theta)$ and $\gR(\theta)$, in general. And in this work we take steps to answer this question for depth $2$ nets deployed for solving certain specifications of the Navier-Stokes PDE.}

\end{document}